\documentclass[onefignum,onetabnum,sort&compress]{siamonline190516}


\usepackage[utf8]{inputenc}

\usepackage{lipsum}
\usepackage{epstopdf}
\ifpdf
  \DeclareGraphicsExtensions{.eps,.pdf,.png,.jpg}
\else
  \DeclareGraphicsExtensions{.eps}
\fi

\usepackage{graphicx}
\graphicspath{{./Figs/}{./}}

\usepackage{amsfonts}       
\usepackage{bm}
\usepackage{amsmath}
\usepackage{nicefrac}       
\usepackage{ulem}
\usepackage{hyperref}
\usepackage{mathtools}
\usepackage{color, colortbl}
\definecolor{TitleGray}{gray}{.8}
\definecolor{RowGray}{gray}{.9}

\newtheorem{remark}{Remark}



\newcommand{\thetitle}{Spectral Discovery of Jointly Smooth Features for Multimodal Data}

\headers{\thetitle}{F. Dietrich, O. Yair, R. Mulayoff, R. Talmon, and I. G. Kevrekidis}

\hypersetup{
  pdftitle={\thetitle},
  pdfauthor={Felix Dietrich, Or Yair, Rotem Mulayoff, Ronen Talmon, Ioannis G. Kevrekidis}
}

\title{\thetitle\thanks{Submitted to the editors on \today. Authors F.D. and O.Y. contributed equally to this work.\funding{The work of O.Y. and R.T. was funded by the European Union’s Horizon 2020 research and innovation programme under grant agreement No. 802735-ERC-DIFFOP. F.D. and I.G.K. were funded by the U.S. Army Research Office under contract/grant number W911NF1710306 (an ARO MURI) and the DARPA ATLAS program. The work of IGK was partially supported by an ARO MURI (Dr. M. Munson) and the DARPA ATLAS program (Dr. J. Zhou).}}}

\author{Felix Dietrich\thanks{Department of Informatics, Technical University of Munich
  (\email{felix.dietrich@tum.de}).}
 \and
Or Yair\thanks{Viterbi Faculty of Electrical Engineering, Technion, Israel Institute of Technology (\email{oryair@campus.technion.ac.il, ronen@ee.technion.ac.il})}
 \and
Rotem Mulayoff\footnotemark[3]
 \and
Ronen Talmon\footnotemark[3]
\and Ioannis G.
Kevrekidis\thanks{Department of Chemical and Biomolecular Engineering and Department of Applied Mathematics and Statistics, Johns Hopkins University and JHMI 
  (\email{yannis@princeton.edu}).}}

\begin{document}
%
%

%
\maketitle              
\begin{abstract}
In this paper, we propose a spectral method for deriving functions that are jointly smooth on multiple observed manifolds.
This allows us to register measurements of the same phenomenon by heterogeneous sensors, and to reject sensor-specific noise.
Our method is unsupervised and primarily consists of two steps. First, using kernels, we obtain a subspace spanning smooth functions on each separate manifold. Then, we apply a spectral method to the obtained subspaces and discover functions that are jointly smooth on all manifolds.
We show analytically that our method is guaranteed to provide a set of orthogonal functions that are as jointly smooth as possible, ordered by increasing Dirichlet energy from the smoothest to the least smooth.
In addition, we show that the extracted functions
can be efficiently extended to unseen data using the Nystr\"{o}m method.
We demonstrate the proposed method on both simulated and real measured data and compare the results to nonlinear variants of the seminal Canonical Correlation Analysis (CCA).
Particularly, we show superior results for sleep stage identification. In addition, we show how the proposed method can be leveraged for finding minimal realizations of parameter spaces of nonlinear dynamical systems. 
\end{abstract}

\begin{keywords}
Manifold learning, SVD, CCA, Kernel methods, Data fusion, Multimodal data analysis.
\end{keywords}

\begin{AMS}
 46Nxx, 47Nxx, 35K08, 58C05
\end{AMS}


%
%
\section{Introduction}
Modern data acquisition typically involves the use of multiple modalities.
Indeed, nowadays, many high- and low-end devices are equipped with multiple sensors, often of different types, giving rise to multimodal data collections. While such acquisitions have become popular only in the last two decades, the discovery of informative features from multimodal observations has been a central problem in data analysis for many years.

Perhaps the first attempt, and the most well-known and widely used algorithm for building efficient representations for multimodal data sets is the celebrated Canonical Correlation Analysis (CCA) \cite{harold1936relations}. Broadly, CCA builds linear projections of the data sets such that the correlation between them is maximized. Despite the extensive body of evidence proving CCA as extremely useful, it suffers from a few prominent shortcomings, facilitating a large body of work improving CCA. 
Kernel CCA (KCCA) \cite{lai2000kernel,akaho2006kernel} extends the linear setting and considers instead nonlinear projections by using the kernel trick \cite{mercer-1909}. 
Multi-view and weighted versions of KCCA were proposed in \cite{yu2013kernel,lindenbaum-2020}.
Deep learning tools allowed the significant extension of the space of possible projections, giving rise to Deep CCA \cite{andrew2013deep}.
The Non-parametric CCA (NCCA) \cite{michaeli2016nonparametric} provides a closed-form solution to a nonlinear variant of the CCA optimization problem, assuming that the projections belong to some Reproducing Kernel Hilbert Space (RKHS).

In addition to the statistical viewpoint adopted by CCA and its variants, recently, geometric and manifold learning methods were proposed as well \cite{eynard2015multimodal,lederman2018learning,lindenbaum-2020} for the purpose of finding common representations to multimodal data. Particularly, in \cite{eynard2015multimodal}, the data are viewed as high-dimensional points residing on low-dimensional manifolds, and the objective is to obtain a basis that simultaneously diagonalizes the Laplacians of all the manifolds. Since this objective cannot be fully achieved, the authors introduce certain off-diagonal penalties to their objective function.

Relying on these recent developments, in this paper, we propose a different approach based on manifold learning.
We retain the setting as in \cite{eynard2015multimodal,Yair-2017b,lederman2018learning,lindenbaum-2020}, assuming that the multimodal data lie on multiple manifolds, building the Laplacian of each manifold and then applying spectral analysis. Instead of products of kernel matrices, however, our approach focuses on the space of real functions defined on these manifolds. We construct a joint function space for all observations, sorted by decreasing smoothness.
Such an approach has roots in operator theoretic dynamical system analysis, dating back to Koopman \cite{koopman-1931}, and has recently regained a lot of attention in learning dynamical systems from observations \cite{budisic-2012,williams-2015,dietrich-2016b} as well as in computer graphics \cite{ovsjanikov2012functional}.
Concretely, we define smooth functions on a manifold as functions that can be spanned by the top eigenfunctions of the Laplace-Beltrami operator of the manifold.
Then, our main goal is to discover functions that are jointly smooth on all observed manifolds.
We show that such jointly smooth functions are of great interest since they represent and parametrize the commonality between the observed manifolds, which provides useful information on the co-relationship between the respective data sets obtained through possibly different modalities. 
To accomplish our goal, we propose a two-step algorithm.
First, for each manifold, we extract a subspace spanning the family of smooth functions on that manifold. This is achieved by constructing a kernel on the data approximating the Laplacian.
Second, we apply singular value decomposition (SVD) to the union of all subspaces and show analytically that the obtained singular vectors indeed represent jointly smooth functions on the observed manifolds.
We test the proposed algorithm on real measured multimodal data. Specifically, we demonstrate the ability to accurately identify the sleep stage from various simultaneous physiological recordings. In addition, we show how joint latent spaces can help in positional alignment from multiple video feeds of a race track, and how this method can be exploited for finding effective parameters of nonlinear dynamical systems from observations in a model-free manner.

Our main contributions are as follows. First, we introduce an approach for multimodal data analysis based on the notion of jointly smooth functions on manifolds.
Second, we propose an algorithm for finding such jointly smooth functions with theoretical guarantees. We show that the proposed algorithm supports multiple (more than two) manifolds, and that it can be efficiently computed and extended to unseen data.
We also discuss how the jointly smooth functions can be used to embed the data common to the different sensors in a common latent space.
Third, we show how jointly smooth functions can be constructed efficiently, with an almost linear time and space complexity.
Fourth, we showcase the proposed algorithm on simulated and real measured data sets. Specifically, we consider a sleep stage identification task and observe superior results compared to competing methods. We also demonstrate how to obtain a common latent space for two video streams, and identify effective parameters in a dynamical systems context.

\section{Smooth Functions}
    \label{sec:pre}
\subsection{Smooth Functions on Manifolds}
    \label{sub:Continuous}
    Let $\mathcal{M}_x\subseteq\mathbb{R}^{d}$ be a smooth, compact manifold embedded in a $d$-dimensional Euclidean space.
    A function $f:\mathcal{M}_{x}\to\mathbb{R}$ is said to be smooth if $f\in C^{\infty}(\mathcal{M}_x,\mathbb{R})$, that is, the function $f$ and all of its derivatives are continuous.
    In this section, we define a different notion of smoothness related to the Laplace-Beltrami operator.
    The Laplace-Beltrami operator $L_x$ is a linear operator generalizing the Laplacian on Euclidean spaces to Riemannian manifolds~\cite{rosenberg-1997}.
    The eigenfunctions $\psi_i$ of the Laplace-Beltrami operator span a dense subset of the function space $H^0=L^2(\mathcal{M}_x,\mathbb{R})$. The eigenvalues of the Laplace-Beltrami are real (and non-negative), so we can sort the associated eigenfunctions $\psi_i$ such that $\lambda_i \leq \lambda_j$ for $i<j$.
    We say that $\psi_i$ is smoother than $\psi_j$ if $\lambda_i < \lambda_j$. For example, the constant function $\psi \equiv 1$ is an eigenfunction with eigenvalue $\lambda = 0$. The higher the value of $\lambda_i$, the more oscillatory is the corresponding eigenfunction (that is, less smooth). For more details, we refer to~\cite{giannakis-2017}.
    It was shown that the best representation basis, in terms of truncated representation of functions $f:\mathcal{M}_{x}\to\mathbb{R}$ such that $\Vert \nabla f\Vert_2 \leq1$, are in fact the eigenfunctions of the Laplace-Beltrami operator $L_x$ \cite{aflalo2015optimality,aflalo2016best,brezis2017rigidity}.
    Thus, in that sense, we say that $f_{i}:\mathcal{M}_{x}\to\mathbb{R}$ is smoother than $f_{j}:\mathcal{M}_{x}\to\mathbb{R}$ if $\Vert L_{x}f_{i}\Vert_2/\Vert f_{i}\Vert_2 <\Vert L_{x}f_{j}\Vert_2/\Vert f_{j}\Vert_2$. Note that we write $\Vert\cdot\Vert_2$ for the $L^2$ norm on $H^0$, but we use the same notation if we consider finite-dimensional approximations and the corresponding 2-norm in $N$ dimensions later.

\subsection{Smooth Functions on Data}
    Consider a set of points $\{ \boldsymbol{x}_{i}\in\mathcal{M}_{x} \} _{i=1}^{N}$ residing on a low-dimensional manifold $\mathcal{M}_{x}\subseteq\mathbb{R}^{d}$ embedded in a $d$-dimensional Euclidean space.
    Many unsupervised data analysis methods use kernels to effectively represent data \cite{scholkopf1997kernel,belkin2002laplacian,shawe2004kernel,coifman2006diffusion}.
    Kernels are typically symmetric and positive functions~\cite{mercer-1909}, represented as matrices by evaluating them on pairs of data points. Perhaps the most widely used kernel is the Gaussian kernel, given by
    \[
    \boldsymbol{K}_{x}\left[i,j\right]=\exp\left(-\frac{ \smash[b]{\Vert \boldsymbol{x}_{i}-\boldsymbol{x}_{j}\Vert _{2}^{2}} }{ \smash[b]{2\sigma_x^{2}} } \right),\qquad\boldsymbol{K}_x\in\mathbb{R}^{N\times N}.
    \]
    This Gaussian kernel $\boldsymbol{K}_x$ approximates the operator $\exp(-\sigma_x L_{x})$, whose eigenfunctions are the same as the eigenfunctions of $L_x$~\cite{yosida-1995}. 
    Similar to the matrix representation of the kernel, we use a vector representation of functions through their evaluation on the data.
    Let $\boldsymbol{w}_{i}$ be a unit norm eigenvector of $\boldsymbol{K}_{x}$ such that $\boldsymbol{K}_{x}\boldsymbol{w}_{i}=\lambda_{i}\boldsymbol{w}_{i}$, where $\lambda_{1}\geq\lambda_{2}\geq\dots\geq\lambda_{N}$.
    These eigenvectors form a basis ordered from the smoothest vector
    $\boldsymbol{w}_{1}$ to the most oscillatory vector $\boldsymbol{w}_{N}$, where smoothness is defined in a manner similar to the smoothness with respect to the Laplace-Beltrami operator $L_x$. The following definition is inspired by the Dirichlet energy $\Vert\nabla {f}\Vert^2_2$ of a function~\cite{giannakis-2017}, and adapted to kernel matrices. Note that Dirichlet energy decreases for smoother functions, while our smoothness score increases.

\begin{definition}{\textbf{Truncated smoothness score.}}\label{def:smoothness score}
For a number $d<N$ and a given function $\boldsymbol{f}\in\mathbb{R}^N$, define its $d$-truncated smoothness score $E_x^d({\boldsymbol{f}})$ with respect to a kernel $\boldsymbol{K}_{x}$ by
\begin{equation}
    E_x^{d}({\boldsymbol{f}})\coloneqq\sum_{i=1}^d\left\langle \boldsymbol{w}_i, \boldsymbol{f}\right\rangle^2 = \big\Vert \boldsymbol{W}_{x}^{T}\boldsymbol{f}\big\Vert _{2}^{2},
\end{equation}
where $ \smash{ \boldsymbol{W}_{x}\coloneqq [\begin{matrix}\boldsymbol{w}_{1} & \boldsymbol{w}_{2} & \cdots & \boldsymbol{w}_{d}\end{matrix}] \in\mathbb{R}^{N\times d}} $ is a concatenation of sorted eigenvectors of $\boldsymbol{K}_x$.

\end{definition}

\begin{definition}{\textbf{$d$-smooth function.}}
    \label{def:smooth}
A function $\boldsymbol{f}\in\mathbb{R}^N$ is called ``$d$-smooth on $\mathcal{M}_x$'' if its $d$-truncated smoothness score with respect to $\boldsymbol{K}_{x}$ is equal to its squared $2$-norm, i.e.  $E_x^{d}({\boldsymbol{f}})=\left\Vert \boldsymbol{f}\right\Vert^2_{2}$.
\end{definition}

\begin{remark}
For any $\boldsymbol{f}\in\mathbb{R}^N$, we have $$E_x^d({\boldsymbol{f}})=\left\Vert \boldsymbol{f}\right\Vert^2_{2}\iff \boldsymbol{f}\in \text{span}\left(\boldsymbol{W}_{x}\right).$$
\end{remark}

\begin{definition}{\textbf{Jointly smooth function.}}\label{def:joint}
A function $\boldsymbol{f}\in\mathbb{R}^N$ is called ``jointly
smooth on $\mathcal{M}_x$ and $\mathcal{M}_y$'' if it is $d$-smooth with respect to both $\boldsymbol{K}_{x}$ and $\boldsymbol{K}_{y}$, i.e.  $E_x^{d}({\boldsymbol{f}}) = E_y^{d}({\boldsymbol{f}}) = \left\Vert \boldsymbol{f}\right\Vert _{2}^2$.
\end{definition}

\begin{remark}
Empirical functions, especially in the presence of observation noise, typically have a nonzero inner product with all eigenvectors of the kernel $\boldsymbol{K}_x$. Even though they would then not be smooth according to Definition \ref{def:smooth}, we can still compare them using our smoothness score \eqref{def:smoothness score} and say that $\boldsymbol{f}_{i}\in\mathbb{R}^{N}$ is smoother
than $\boldsymbol{f}_{j}\in\mathbb{R}^{N}$ if $\Vert \boldsymbol{W}_{x}^{T}\boldsymbol{f}_{i}\Vert _{2}/\Vert \boldsymbol{f}_{i}\Vert _{2}>\Vert \boldsymbol{W}_{x}^{T}\boldsymbol{f}_{j}\Vert _{2}/\Vert \boldsymbol{f}_{j}\Vert _{2}$.
\end{remark}

\section{Problem Formulation}
    \label{sec:ProblemFormulation}
Let $\mathcal{M}_{x}\subseteq\mathbb{R}^{d_{x}}$ and $\mathcal{M}_{y}\subseteq\mathbb{R}^{d_{y}}$ be two manifolds embedded in high-dimensional ambient spaces.
We later extend this formulation to more than two manifolds. Consider two sets of observations $\left\{ \boldsymbol{x}_{i}\in\mathcal{M}_{x}\right\} _{i=1}^{N}$ and $\left\{ \boldsymbol{y}_{i}\in\mathcal{M}_{y}\right\} _{i=1}^{N}$ such that $\left(\boldsymbol{x}_{i},\boldsymbol{y}_{i}\right)$ is a corresponding pair.
In this work, we search for an orthogonal set of functions $\boldsymbol{f}_m \in \mathbb{R}^N$ such that each $\boldsymbol{f}_m$ is jointly smooth on $\mathcal{M}_{x}$ and $\mathcal{M}_{y}$. 
Note that in this discrete setting, the functions $\boldsymbol{f}_{m}\in\mathbb{R}^{N}$ are defined both on $\left\{ \boldsymbol{x}_{i}\in\mathcal{M}_{x}\right\} _{i=1}^{N}$ and $\left\{ \boldsymbol{y}_{i}\in\mathcal{M}_{y}\right\} _{i=1}^{N}$, but in fact, they can be viewed as sampled versions of continuous (and smooth) functions $f_{m}^{x}:\mathbb{\mathcal{M}}_{x}\to\mathbb{R}$ and ${f}_{m}^{y}:\mathbb{\mathcal{M}}_{y}\to\mathbb{R}$ such that $f_{m}^{x}\left(\boldsymbol{x}_{i}\right)=f_{m}^{y}\left(\boldsymbol{y}_{i}\right)=\boldsymbol{f}_{m}\left[i\right]$.
We posit that such smooth functions are of great interest since they represent and parametrize the common part between the two manifolds.
Therefore, the remainder of this paper revolves around the question: how can we find functions that are jointly smooth, given only the kernels $\boldsymbol{K}_{x}$ and $\boldsymbol{K}_{y}$ associated with the observed data?

Figure \ref{fig:Toy1} provides an illustrative example.
Consider pairs of observations $\left\{ \left(\boldsymbol{x}_{i},\boldsymbol{y}_{i}\right)\right\} _{i=1}^{N}$ such that $\boldsymbol{x}_{i}\in\mathcal{M}_{x}\subset\mathbb{R}^{2}$ resides on a spiral in $\mathbb{R}^2$ and $\boldsymbol{y}_{i}\in\mathcal{M}_{y}\subset\mathbb{R}^{3}$ resides on a torus in $\mathbb{R}^3$ as depicted in Fig. \ref{fig:Toy1}(a).
Figures \ref{fig:Toy1}(b,c) present smooth functions on $\mathcal{M}_x$ and $\mathcal{M}_y$ that are not jointly smooth. Conversely, Fig. \ref{fig:Toy1}(d) presents a jointly smooth function.

\begin{figure}[h]
    \centering
    \includegraphics[width=1\columnwidth]{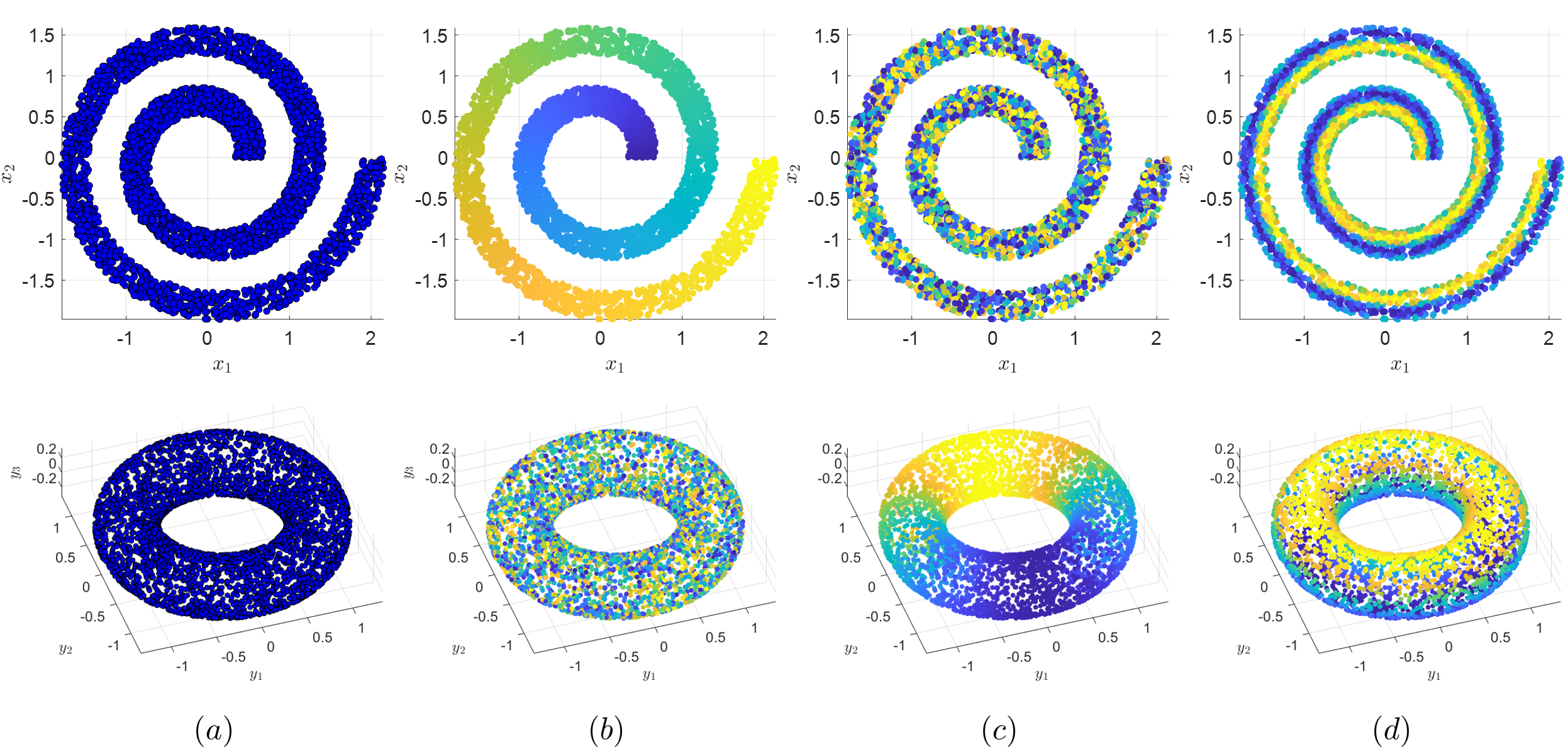}
    \caption{(a) Points $\boldsymbol{x}_{i}\in\mathcal{M}_{x}$ on a spiral at the top row and points $\boldsymbol{y}_{i}\in\mathcal{M}_{y}$ on a torus at the bottom row, where each $\left(\boldsymbol{x}_{i},\boldsymbol{y}_{i}\right)$ is a corresponding pair.
    (b) The two manifolds colored according to the \textit{same} function $\boldsymbol{f}_x$, which is smooth on $\mathcal{M}_{x}$.
    (c) The two manifolds colored according to the \textit{same} function $\boldsymbol{f}_y$, which is smooth on $\mathcal{M}_{y}$.
    We observe that $\boldsymbol{f}_{x}$ is smooth on $\mathcal{M}_{x}$ but it is not smooth on $\mathcal{M}_{y}$. Similarly, $\boldsymbol{f}_{y}$ is smooth on $\mathcal{M}_{y}$ but it is not smooth on $\mathcal{M}_{x}$.
    (d) The two manifolds colored according to the \textit{same} jointly smooth function $\boldsymbol{f}_1$ on $\mathcal{M}_{x}$ and $\mathcal{M}_{y}$.
    For more details, see Section \ref{sub:toy}.}
    

    \label{fig:Toy1}
\end{figure}

\section{Proposed Method}
    \label{sec:sol}
The proposed method is based on the following lemma.
\begin{lemma}
    \label{lem:equal}
Consider $\boldsymbol{A},\boldsymbol{B}\in\mathbb{R}^{N\times d}$
such that $\boldsymbol{A}^{T}\boldsymbol{A}=\boldsymbol{B}^{T}\boldsymbol{B}=\boldsymbol{I}_{d}$.
Let $\boldsymbol{W}\coloneqq\left[\begin{matrix}\boldsymbol{A} & \boldsymbol{B}\end{matrix}\right]\in\mathbb{R}^{N\times2d}$.
Then, the following decomposition of $\boldsymbol{W}$ 
\[
\boldsymbol{W}=\boldsymbol{U}\boldsymbol{\Sigma}\boldsymbol{V}^{T}
\]
is an SVD, such that
\[
\left\Vert \boldsymbol{A}^{T}\boldsymbol{u}_{i}\right\Vert _{2}^{2}=\left\Vert \boldsymbol{B}^{T}\boldsymbol{u}_{i}\right\Vert _{2}^{2}=\frac{1}{2}\sigma_{i}^{2}\qquad\forall i,
\]
where $\boldsymbol{u}_{i}$ is the $i$th column of $\boldsymbol{U}$
and $\sigma_{i}$ equals $\boldsymbol{\Sigma}\left[i,i\right]$ if
$i\leq2d$ and $0$ otherwise.
In addition, $\boldsymbol{V}=\frac{1}{\sqrt{2}}\left[\begin{matrix}\boldsymbol{Q} & \boldsymbol{Q}\\
\boldsymbol{R} & -\boldsymbol{R}
\end{matrix}\right]\in\mathbb{R}^{2d\times2d}$, $\boldsymbol{\Sigma}^{2}=\left[\begin{matrix}\boldsymbol{I}+\boldsymbol{\Gamma} & \boldsymbol{0}\\
\boldsymbol{0} & \boldsymbol{I}-\boldsymbol{\Gamma}
\end{matrix}\right]\in\mathbb{R}^{2d\times2d}$, and $\boldsymbol{U}=\boldsymbol{W}\boldsymbol{V}\boldsymbol{\Sigma}^{\dagger}\in\mathbb{R}^{N\times2d}$,
where $\boldsymbol{\Sigma}^{\dagger}$ is the pseudo-inverse of $\boldsymbol{\Sigma}$ and $\boldsymbol{A}^{T}\boldsymbol{B}=\boldsymbol{Q}\boldsymbol{\Gamma}\boldsymbol{R}^{T}\in \mathbb{R}^{d\times d}$ is an SVD.
\end{lemma}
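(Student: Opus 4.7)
The plan is to verify the claimed SVD by constructing and diagonalizing the Gram matrix $\boldsymbol{W}^T\boldsymbol{W}$, and then reading off $\boldsymbol{U}$ from the standard formula $\boldsymbol{U}=\boldsymbol{W}\boldsymbol{V}\boldsymbol{\Sigma}^{\dagger}$. Using the orthonormality assumptions on $\boldsymbol{A}$ and $\boldsymbol{B}$, the block structure yields
\[
\boldsymbol{W}^{T}\boldsymbol{W}=\begin{bmatrix}\boldsymbol{I} & \boldsymbol{A}^{T}\boldsymbol{B}\\ \boldsymbol{B}^{T}\boldsymbol{A} & \boldsymbol{I}\end{bmatrix}=\begin{bmatrix}\boldsymbol{I} & \boldsymbol{Q}\boldsymbol{\Gamma}\boldsymbol{R}^{T}\\ \boldsymbol{R}\boldsymbol{\Gamma}\boldsymbol{Q}^{T} & \boldsymbol{I}\end{bmatrix},
\]
so the whole statement reduces to simultaneously diagonalizing two $d\times d$ blocks whose singular bases are $\boldsymbol{Q}$ and $\boldsymbol{R}$. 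This motivates the ansatz $\boldsymbol{V}=\tfrac{1}{\sqrt{2}}\bigl[\begin{smallmatrix}\boldsymbol{Q} & \boldsymbol{Q}\\ \boldsymbol{R} & -\boldsymbol{R}\end{smallmatrix}\bigr]$.

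The first step is to confirm that $\boldsymbol{V}$ is orthogonal: a direct $2\times2$ block multiplication of $\boldsymbol{V}^{T}\boldsymbol{V}$, using $\boldsymbol{Q}^{T}\boldsymbol{Q}=\boldsymbol{R}^{T}\boldsymbol{R}=\boldsymbol{I}_{d}$, collapses the off-diagonal blocks to zero and the diagonal blocks to $\boldsymbol{I}_{d}$. Next, I would apply $\boldsymbol{W}^{T}\boldsymbol{W}$ to each half of $\boldsymbol{V}$: the identities $\boldsymbol{Q}\boldsymbol{\Gamma}\boldsymbol{R}^{T}\boldsymbol{R}=\boldsymbol{Q}\boldsymbol{\Gamma}$ and $\boldsymbol{R}\boldsymbol{\Gamma}\boldsymbol{Q}^{T}\boldsymbol{Q}=\boldsymbol{R}\boldsymbol{\Gamma}$ reduce the action to multiplication by $\boldsymbol{I}\pm\boldsymbol{\Gamma}$, yielding $\boldsymbol{W}^{T}\boldsymbol{W}\boldsymbol{V}=\boldsymbol{V}\boldsymbol{\Sigma}^{2}$ with the claimed $\boldsymbol{\Sigma}^{2}$. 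This shows that the squared singular values of $\boldsymbol{W}$ are $1\pm\gamma_{i}$, which are well-defined and non-negative because singular values $\gamma_{i}$ of $\boldsymbol{A}^{T}\boldsymbol{B}$ lie in $[0,1]$ (consequence of $\boldsymbol{A}$ and $\boldsymbol{B}$ having orthonormal columns).

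With $\boldsymbol{V}$ and $\boldsymbol{\Sigma}$ in hand, define $\boldsymbol{U}=\boldsymbol{W}\boldsymbol{V}\boldsymbol{\Sigma}^{\dagger}$; the columns corresponding to nonzero $\sigma_{i}$ are $\boldsymbol{u}_{i}=\sigma_{i}^{-1}\boldsymbol{W}\boldsymbol{v}_{i}$, and those in the nullspace of $\boldsymbol{W}^{T}$ can be completed arbitrarily. For $i\le d$, writing $\boldsymbol{v}_{i}=\tfrac{1}{\sqrt{2}}[\boldsymbol{q}_{i};\boldsymbol{r}_{i}]$ gives $\boldsymbol{W}\boldsymbol{v}_{i}=\tfrac{1}{\sqrt{2}}(\boldsymbol{A}\boldsymbol{q}_{i}+\boldsymbol{B}\boldsymbol{r}_{i})$, and then $\boldsymbol{A}^{T}\boldsymbol{u}_{i}=\tfrac{1+\gamma_{i}}{\sigma_{i}\sqrt{2}}\,\boldsymbol{q}_{i}$, whose squared norm is $(1+\gamma_{i})/2=\sigma_{i}^{2}/2$; the computation for $\boldsymbol{B}^{T}\boldsymbol{u}_{i}$ is symmetric and gives the same value. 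The case $d<i\le 2d$ is identical with the sign flip, producing the factor $1-\gamma_{i-d}$. For $i>2d$, $\sigma_{i}=0$ forces $\boldsymbol{u}_{i}\in\ker\boldsymbol{W}^{T}$, hence $\boldsymbol{A}^{T}\boldsymbol{u}_{i}=\boldsymbol{B}^{T}\boldsymbol{u}_{i}=\mathbf{0}$, trivially matching $\tfrac{1}{2}\sigma_{i}^{2}=0$.

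The only subtlety I expect is bookkeeping: checking that $\boldsymbol{U}$ as produced has orthonormal columns (which follows because $\boldsymbol{V}^{T}\boldsymbol{W}^{T}\boldsymbol{W}\boldsymbol{V}=\boldsymbol{\Sigma}^{2}$ makes $\boldsymbol{W}\boldsymbol{V}\boldsymbol{\Sigma}^{\dagger}$ an isometry onto the range of $\boldsymbol{W}$), and handling the potential non-uniqueness when some $\gamma_{i}=1$ or $\gamma_{i}=0$ — in those cases the singular value $\sigma_{i}^{2}=1\mp\gamma_{i}$ collapses to $0$ or has multiplicity, but the stated formulas still hold because the identity $\|\boldsymbol{A}^{T}\boldsymbol{u}_{i}\|_{2}^{2}=\tfrac{1}{2}\sigma_{i}^{2}$ only depends on the subspaces, not on a choice of orthonormal basis inside each $\sigma_{i}$-eigenspace.
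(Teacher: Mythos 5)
Your proof is correct and follows essentially the same route as the paper's: both diagonalize $\boldsymbol{W}^{T}\boldsymbol{W}$ via the SVD of $\boldsymbol{A}^{T}\boldsymbol{B}$ to identify $\boldsymbol{V}$ and $\boldsymbol{\Sigma}^{2}$, then derive $\Vert\boldsymbol{A}^{T}\boldsymbol{u}_{i}\Vert_{2}^{2}=\Vert\boldsymbol{B}^{T}\boldsymbol{u}_{i}\Vert_{2}^{2}=\tfrac{1}{2}\sigma_{i}^{2}$ from the relation between $\boldsymbol{u}_{i}$ and $\sigma_{i}\boldsymbol{v}_{i}$. Your version is in fact slightly more complete, since you explicitly check the orthogonality of $\boldsymbol{V}$ and $\boldsymbol{U}$ and treat the degenerate cases $\sigma_{i}=0$ and $i>2d$, which the paper leaves implicit.
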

The proof appears in the Supplementary Material (SM).

The consequence of Lemma \ref{lem:equal} in our setting is Corollary \ref{col:col}, which is derived by replacing the matrices $\boldsymbol{A}$ and $\boldsymbol{B}$ in Lemma \ref{lem:equal} with the matrices $\boldsymbol{W}_{x}$ and $\boldsymbol{W}_{y}$, consisting of the first $d$ dominant (normalized) eigenvectors of kernels $\boldsymbol{K}_{x}$ and $\boldsymbol{K}_{y}$, respectively.
\begin{corollary}
    \label{col:col}
    The most dominant left singular vectors\textbf{ $\boldsymbol{U}\in\mathbb{R}^{N\times d}$
    }of $\boldsymbol{W}\coloneqq\left[\begin{matrix}\boldsymbol{W}_{x} & \boldsymbol{W}_{y}\end{matrix}\right]$
    satisfy
$$\left\Vert \boldsymbol{W}_{x}^{T}\boldsymbol{u}_{i}\right\Vert _{2}^{2}=\left\Vert \boldsymbol{W}_{y}^{T}\boldsymbol{u}_{i}\right\Vert _{2}^{2}=\frac{1}{2}\sigma_{i}^{2}=\frac{1}{2}\left(1+\gamma_{i}\right),\qquad i\leq d,$$
where $\gamma_{i}=\boldsymbol{\Gamma}\left[i,i\right]$. 
\end{corollary}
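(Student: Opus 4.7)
The plan is to deduce Corollary \ref{col:col} as an immediate specialization of Lemma \ref{lem:equal} under the substitution $\boldsymbol{A}=\boldsymbol{W}_x$, $\boldsymbol{B}=\boldsymbol{W}_y$. The first step is to verify the hypotheses of the lemma. Since $\boldsymbol{K}_x$ and $\boldsymbol{K}_y$ are real symmetric (Gaussian) kernel matrices, their eigenvectors can be chosen orthonormal, so the matrices $\boldsymbol{W}_x,\boldsymbol{W}_y\in\mathbb{R}^{N\times d}$ formed from the top $d$ unit-norm eigenvectors satisfy $\boldsymbol{W}_x^{T}\boldsymbol{W}_x=\boldsymbol{W}_y^{T}\boldsymbol{W}_y=\boldsymbol{I}_d$, matching the hypothesis of Lemma \ref{lem:equal}.

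Applying the lemma to $\boldsymbol{W}=[\boldsymbol{W}_x\;\;\boldsymbol{W}_y]$ directly produces the equality $\|\boldsymbol{W}_x^{T}\boldsymbol{u}_i\|_2^2=\|\boldsymbol{W}_y^{T}\boldsymbol{u}_i\|_2^2=\tfrac{1}{2}\sigma_i^2$ for every $i$, where $\boldsymbol{u}_i$ is the $i$th left singular vector of $\boldsymbol{W}$ and $\sigma_i$ the corresponding singular value. This gives the first two equalities in the corollary with no extra work.

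The only remaining task is to identify the value of $\sigma_i^2$ for the \emph{dominant} $d$ singular vectors. By Lemma \ref{lem:equal}, $\boldsymbol{\Sigma}^2$ is block-diagonal with blocks $\boldsymbol{I}+\boldsymbol{\Gamma}$ and $\boldsymbol{I}-\boldsymbol{\Gamma}$, where $\boldsymbol{\Gamma}$ carries the singular values $\gamma_i$ of $\boldsymbol{W}_x^{T}\boldsymbol{W}_y$. The key observation to justify which block is dominant is that $\gamma_i\in[0,1]$: because $\boldsymbol{W}_x$ and $\boldsymbol{W}_y$ have orthonormal columns, the singular values of $\boldsymbol{W}_x^{T}\boldsymbol{W}_y$ are the cosines of the principal angles between the subspaces $\text{span}(\boldsymbol{W}_x)$ and $\text{span}(\boldsymbol{W}_y)$, hence lie in $[0,1]$. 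It follows that $1+\gamma_i\geq 1-\gamma_i\geq 0$ for all $i$, so the top $d$ squared singular values of $\boldsymbol{W}$ are precisely the diagonal entries of $\boldsymbol{I}+\boldsymbol{\Gamma}$, yielding $\sigma_i^2=1+\gamma_i$ for $i\leq d$.

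I do not anticipate a real obstacle: the statement is essentially a bookkeeping corollary of the lemma. The only subtlety worth mentioning is the convention-dependent ordering of the $\gamma_i$; provided they are listed in decreasing order (as in the SVD of $\boldsymbol{W}_x^{T}\boldsymbol{W}_y$), the identification of the dominant singular vectors of $\boldsymbol{W}$ with the $\boldsymbol{I}+\boldsymbol{\Gamma}$ block is automatic, and the three-term equality in the corollary follows.
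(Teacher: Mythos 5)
Your proposal is correct and follows essentially the same route as the paper: the corollary is obtained by substituting $\boldsymbol{A}=\boldsymbol{W}_x$, $\boldsymbol{B}=\boldsymbol{W}_y$ into Lemma \ref{lem:equal}, with the identification $\sigma_i^2=1+\gamma_i$ for the top $d$ singular vectors justified by $\gamma_i=\cos(\theta_i)\in[0,1]$, exactly as the paper notes in the discussion surrounding the corollary. Your write-up is somewhat more explicit than the paper's one-line derivation (in particular in checking the orthonormality hypothesis and the ordering of the two blocks of $\boldsymbol{\Sigma}^2$), but the argument is the same.
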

By definition, $\gamma_i$ are the singular values of $\boldsymbol{W}_x^T \boldsymbol{W}_y$, and thus, are in fact the cosine of the principal angles $\theta_i$ between the two sub-spaces spanned by $\boldsymbol{W}_{x}$ and $\boldsymbol{W}_{y}$, that is $\gamma_{i}=\cos\left(\theta_{i}\right)$, where $0\leq\gamma_{i}\leq1$. Therefore, for each singular value $\gamma_{i}=1$, there exists a common direction between the two sub-spaces. In addition, directions that are not common are associated with $\gamma_{i}<1$, and the closer $\gamma_i$ is to the value $1$, the smaller is the angle between the two respective directions.

By corollary \ref{col:col}, only when there exists a common direction, the respective singular value $\gamma_i = 1$, $\sigma_i^2=2$, and $\boldsymbol{u}_i$ strictly satisfies Definition \ref{def:joint} of jointly smooth functions. This typically holds for the constant vector $\boldsymbol{u}_1$, satisfying  $\Vert \boldsymbol{W}_{x}^{T}\boldsymbol{u}_{1}\Vert _{2}^{2}=\Vert \boldsymbol{W}_{y}^{T}\boldsymbol{u}_{1}\Vert _{2}^{2}=1$ independent of the particular kernels $\boldsymbol{K}_{x}$ and $\boldsymbol{K}_{y}$. Other non-degenerate singular vectors representing common directions may exist, yet, due to noise and other possible distortions, it is likely that the obtained singular vectors $\boldsymbol{u}_i$ will not be strictly smooth, that is $\Vert \boldsymbol{W}_{x}^{T}\boldsymbol{u}_{i}\Vert _{2}^{2}<1$. Still, by Corollary \ref{col:col}, the most dominant left singular vectors of $\boldsymbol{U}$ are ordered from the ``most'' jointly smooth function to the least with respect to their $d$-truncated smoothness score.
This implies that taking the top $M$ columns $\boldsymbol{u}_i \in\mathbb{R}^N$ of $\boldsymbol{U}$ accomplishes the goal of finding jointly smooth functions. The entire procedure is summarized in Algorithm \ref{alg:2views}. Overall, the output of Algorithm \ref{alg:2views} are smooth functions $\boldsymbol{f}_i$, which are the top $M$ left singular vectors $\boldsymbol{u}_i$.

\begin{remark}
When the compact SVD of $\boldsymbol{W}\coloneqq\left[\begin{matrix}\boldsymbol{W}_{x} & \boldsymbol{W}_{y}\end{matrix}\right]$ is unique, any method of computing the SVD can be applied. However, when the compact SVD is not unique, one needs to construct the SVD decomposition as prescribed by the second part of Lemma \ref{lem:equal} and implemented in Steps 3 and 4 of Algorithm \ref{alg:2views}. Importantly, computing this specific SVD rather than applying a generic SVD algorithm is computationally efficient since it requires to decompose $\boldsymbol{W}_{x}^{T}\boldsymbol{W}_{y}\in\mathbb{R}^{d\times d}$ instead of $\boldsymbol{W}\in\mathbb{R}^{N\times2d}$, where $d < N$.
\end{remark}

\subsection{Choosing the value of $M$}
    \label{sub:ChooseM}
The discussion above implies that the obtained functions $\boldsymbol{u}_i$ might not be strictly smooth, that is $\Vert \boldsymbol{W}_{x}^{T}\boldsymbol{u}_{i}\Vert _{2}^{2}<1$. Hence, in Algorithm \ref{alg:2views} we use the top $M$ functions. The number of functions $M$ is therefore a hyperparameter of the algorithm that needs to be set a priori. We propose to set $M$ as the number of functions $\boldsymbol{u}_i$ satisfying $\Vert \boldsymbol{W}_{x}^{T}\boldsymbol{u}_{i}\Vert _{2}^{2}>E_0$, where $E_0$ is a suitable threshold.
Similarly to the jackstraw method \cite{chung2015statistical}, one can permute one of the observations, e.g. $\{ (\boldsymbol{x}_{i},\boldsymbol{y}_{\pi\left(i\right)}) \} $ where $\pi$ is a random permutation, and then, compute the matrix $\tilde{\boldsymbol{\Gamma}}$ from Lemma \ref{lem:equal}.
Except the first trivial singular value $\tilde{\gamma}_1 = 1$, the rest of the singular values $\tilde{\gamma}_i$ and the singular vectors $\tilde{\boldsymbol{u}}_i$ are associated with pure noise observations. Thus, the second singular value $\tilde{\gamma}_2$ provides us a threshold to the original singular values $\gamma_i$, that is, $\gamma_{i}>\tilde{\gamma}_{2}$, which leads to the following threshold
\[E_0=\frac{1}{2}\left(1+\tilde{\gamma}_{2}\right).\]
Alternatively, if we assume random (independent) observations we can provide an estimation for $\tilde{\gamma}_2$ since we can write $\tilde{\gamma}_{2}=\cos\left(\theta_{2}\right)$ where $\theta_2$ is the smallest principal angle between two random sub-spaces uniformly distributed (again, ignoring the first trivial constant sub-space). Based on \cite{johnstone2008multivariate}, we derived our proposed threshold
$$E_0=\frac{1}{2}+\frac{\sqrt{d-\frac{1}{2}}\sqrt{N-d-\frac{1}{2}}}{N-1}.$$
See the SM for the derivation of this threshold.

\subsection{Multiple Manifolds (Multiple Views)}
The proposed method can be extended to support multiple manifolds in a straight-forward manner. While Lemma \ref{lem:equal} is not valid for more than two manifolds, numerical tests consistently show that the na\"{i}ve extension provides satisfactory results. 
Consider $K>2$ sets of observations $\big\{ \boldsymbol{x}_{i}^{(1)}\in\mathcal{M}_{1}\big\} _{i=1}^{N},\big\{ \boldsymbol{x}_{i}^{(2)}\in\mathcal{M}_{2}\big\} _{i=1}^{N},\ldots,\big\{ \boldsymbol{x}_{i}^{(K)}\in\mathcal{M}_{K}\big\} _{i=1}^{N}$, where each $\big(\boldsymbol{x}_{i}^{(1)},\boldsymbol{x}_{i}^{(2)},\dots,\boldsymbol{x}_{i}^{(K)}\big)$ is a tuple of aligned observations. For $k=1,\ldots,K$, let $\boldsymbol{W}_{k}\in\mathbb{R}^{N\times d}$ be the $d$ dominant eigenvectors of the kernel $\boldsymbol{K}_{k}$ constructed from the observations of the $k$-th manifold. 
Denote $\boldsymbol{W}\coloneqq\left[\begin{matrix}\boldsymbol{W}_{1} & \boldsymbol{W}_{2} & \cdots & \boldsymbol{W}_{K}\end{matrix}\right]\in \mathbb{R}^{N\times Kd}$. 
Similarly to Corollary \ref{col:col}, we propose to compute the top $M$ left singular vectors of $\boldsymbol{W}$ and view them as the jointly smooth functions on all $K$ manifolds.
The entire extension is outlined in Algorithm \ref{alg:multi}. See Section \ref{sub:sleep} for supporting numerical results.


\begin{algorithm}[t]
\textbf{\uline{Input}}\textbf{:} $2$ observations
$\left\{ \boldsymbol{x}_{i},\boldsymbol{y}_{i}\right\} _{i=1}^{N}$
where $\boldsymbol{x}_{i}\in\mathbb{R}^{d_{x}}$ and $\boldsymbol{y}_{i}\in\mathbb{R}^{d_{y}}$.

\textbf{\uline{Output}}\textbf{:} $M$ joinly smooth functions $\left\{ \boldsymbol{f}_{m}\in\mathbb{R}^{N}\right\} _{m=1}^{M}$.
\begin{enumerate}
\item Compute the kernels $\boldsymbol{K}_{x},\boldsymbol{K}_{y}\in\mathbb{R}^{N\times N}$:
\[
\boldsymbol{K}_{x}\left[i,j\right]=\exp\left(-\frac{\Vert \boldsymbol{x}_{i}-\boldsymbol{x}_{j}\Vert _{2}^{2}}{2\sigma_{x}^{2}}\right), \qquad \boldsymbol{K}_{y}\left[i,j\right]=\exp\left(-\frac{ \smash[b]{ \Vert \boldsymbol{y}_{i}-\boldsymbol{y}_{j}\Vert _{2}^{2}}}{ \smash[b]{2\sigma_{y}^{2}}} \right)
\]
\item Compute $\boldsymbol{W}_{x},\boldsymbol{W}_{y}\in\mathbb{R}^{N\times d}$,
the first $d$ eigenvectors of $\boldsymbol{K}_{x}$ and $\boldsymbol{K}_{y}$
associated with the largest eigenvalues.
\item Compute the SVD decomposition: $
\boldsymbol{W}_{x}^{T}\boldsymbol{W}_{y}=\boldsymbol{Q}\boldsymbol{\Gamma}\boldsymbol{R}^{T}\in\mathbb{R}^{d\times d}$
\item Compute $
\boldsymbol{U}=\frac{1}{\sqrt{2}}\left[\begin{matrix}\boldsymbol{W}_{x} & \boldsymbol{W}_{y}\end{matrix}\right]\left[\begin{matrix}\boldsymbol{Q} & \boldsymbol{Q}\\
\boldsymbol{R} & -\boldsymbol{R}
\end{matrix}\right]\left[\begin{matrix}\boldsymbol{I}+\boldsymbol{\Gamma} & \boldsymbol{0}\\
\boldsymbol{0} & \boldsymbol{I}-\boldsymbol{\Gamma}
\end{matrix}\right]^{-\frac{1}{2}}\in\mathbb{R}^{N\times2d}
$
\item Set $\boldsymbol{f}_{m}$ to the $m$th column of $\boldsymbol{U}$.
\end{enumerate}
\caption{Jointly smooth functions from 2 observations}
    \label{alg:2views}
\end{algorithm}

\begin{algorithm}[t]
\textbf{\uline{Input}}\textbf{:} $K$ observations $\big\{ \boldsymbol{x}_{i}^{(1)},\boldsymbol{x}_{i}^{(2)},\dots\boldsymbol{x}_{i}^{(K)}\big\} _{i=1}^{N}$
where $\boldsymbol{x}_{i}^{(k)}\in\mathbb{R}^{d_{k}}$.

\textbf{\uline{Output}}\textbf{:} $M$ jointly smooth functions $\{ \boldsymbol{f}_{m}\in\mathbb{R}^{N}\} _{m=1}^{M}$.
\begin{enumerate}
\item For each observation set $\big\{ \boldsymbol{x}_{i}^{(k)}\big\} _{i=1}^{N}$
compute the kernel:
\[\boldsymbol{K}_{k}\left[i,j\right]=\exp\left(-\frac{\big\Vert \boldsymbol{x}^{(k)}_{i}-\boldsymbol{x}^{(k)}_{j}\big\Vert _{2}^{2} }{2\sigma_{k}^{2}}\right)
\]
\item Compute $\boldsymbol{W}_{k}\in\mathbb{R}^{N\times d}$, the first
$d$ eigenvectors of $\boldsymbol{K}_{k}$.
\item Set $\boldsymbol{W}\coloneqq\left[\boldsymbol{W}_{1},\boldsymbol{W}_{2},\dots,\boldsymbol{W}_{K}\right]\in\mathbb{R}^{N\times Kd}$
\item Compute the SVD decomposition: $\boldsymbol{W}=\boldsymbol{U}\boldsymbol{\Sigma}\boldsymbol{V}^{T}$
\item Set $\boldsymbol{f}_{m}$ to be the $m$th column of $\boldsymbol{U}$.
\end{enumerate}
\caption{Jointly smooth functions from $K$ observations}
    \label{alg:multi}
\end{algorithm}

\subsection{Learning the common manifold}
\sloppy Similar to the idea of Alternating Diffusion Maps~\cite{lederman2018learning}, we can use jointly smooth functions as embedding coordinates for the common manifold between the sensor data.
The minimal number of embedding coordinates depends on the intrinsic dimension (bounds given by the theorem of Whitney~\cite{whitney-1936}) and the topology of the common manifold.
For example, a circle with intrinsic dimension of one requires at least two coordinates to embed it in a Euclidean space.
In our case, these coordinates would correspond to two jointly smooth functions.

In practice, we first compute all jointly smooth functions that have a score larger than $E_0$ (see Section~\ref{sub:ChooseM}). Then, we use nonlinear manifold learning algorithms such as Diffusion Maps~\cite{coifman2006diffusion} in combination with eigenvector selection algorithms such as Local Linear Regression~\cite{Dsilva-2018,chen-2019} to select good embedding coordinates.
We demonstrate this approach on an example in Section~\ref{sec:highdim}.

\subsection{Analysis of computational complexity and efficient implementation}
    \label{sec:NA}

Algorithms~\ref{alg:2views} and \ref{alg:multi} are kernel methods that employ as many eigendecompositions as there are data sets, and subsequently involve one singular value decomposition (SVD) to extract jointly smooth features. For data sets with a  large number $N$ of data points, numerical methods implementing the algorithms have to be chosen carefully. Otherwise, a brute-force approach  scales with a computational complexity of $O\left(N^3\right)$ for the eigendecompositions and SVD computations, and a memory requirement of $O\left(N^2\right)$ due to the storage of the full kernel matrices for each data set.
In general, for dense matrices and a complete set of eigenvectors, the best achievable computational complexity is $O\left(N^{\omega}\right),$ $\omega\approx 2.376$~\cite{Coppersmith-1990}. However, we can exploit more structure and sparse matrices to bring the computational and memory cost down to an almost linear scaling. 
\sloppy With efficient distance computations, sparse matrix storage, and numerical eigensolvers that can handle such data structures, we can reduce the computational efficiency. Let $d$ be the number of eigenvectors we compute per kernel and let $M$ be the number of jointly smooth functions, then the computational time scales with $O\left(N\log(N)\right)$ and the memory requirement can be reduced to $O\left((2d+M) N\right)$ for the final result, with $O\left(dN+d^2\right)$ during runtime of the Iteratively Restarted Arnoldi algorithm~\cite{Sorensen-1997}.
The kernel matrices can be kept sparse by using a cut-off for the exponentially decaying Gaussian kernels, or by employing (continuous) nearest-neighbor kernels~\cite{berry-2019}.
The factor of $O\left(N\log(N)\right)$ occurs in the computation of the ($k$-nearest neighbor) kernel matrix, not in the eigenvector computations. The latter scale with $O(N)$, as long as the matrix-vector products can be computed in linear time (which is possible because of sparsity) and the number of Lanczos vectors is kept fixed, see~\cite[p.135]{Sorensen-1997}. 
Note that there are many other approaches to good computational scaling of kernel methods~\cite{mcQueen-2016,Shen-2020}.

 The following numerical methods were applied:
 \begin{enumerate}
 \item We compute the continuous nearest-neighbor kernel~\cite{berry-2019} (with $k=25$ and $\delta=1.0$) instead of the squared-exponential kernel. The limiting operator is the same (i.e. in the large data limit, the function spaces are the same), but the kernel can be computed much faster, using SciPy's k-d-tree implementation to obtain the nearest neighbors of each data point.
 \item We use the efficient, sparse eigensolver for Hermitian matrices implemented in SciPy 
  to obtain $d=100$ eigenvectors of each kernel.
 \item We then extract $M=10$ jointly smooth functions $\boldsymbol{f}_m$ using 
 the sparse implementation of the singular value decomposition, on the merged eigenvectors.
 \end{enumerate}
The time and memory efficiency of these algorithms is demonstrated for the toy problem in Section~\ref{sub:toy}.

\section{Extending jointly smooth functions to new data points}
    \label{sec:OOSE}
Consider a new (unseen) set of pairs of observations $\left\{ \left(\boldsymbol{x}_{i}^{\star},\boldsymbol{y}_{i}^{\star}\right)\right\} _{i=1}^{N^{\star}}$. We wish to estimate the evaluation of the computed jointly smooth functions in Algorithm \ref{alg:2views} (or in Algorithm \ref{alg:multi}) on this new set, namely, to estimate new vector coordinates $\boldsymbol{f}^{\star}_{m}\left[i\right]$ or equivalently the values $f_{m}^{x}\left(\boldsymbol{x}^{\star}_{i}\right)$ and $f_{m}^{y}\left(\boldsymbol{y}^{\star}_{i}\right)$.
Clearly, we can append the new points to the existing set of points and reapply Algorithm \ref{alg:2views} to the extended set.
In this section, we propose instead a more efficient estimate of $\boldsymbol{f}_{m}^{\star}\left[i\right]$ that can be implemented in an online manner, supporting incoming streaming data.
The proposed extension is based on the Nystr\"{o}m method, which has been extensively used for out-of-sample extension in the context of kernel methods \cite{coifman2006geometric,bengio2004out,fowlkes2001efficient,williams2001using}.
Consider a new pair of data $\left(\boldsymbol{x}_{i}^{\star},\boldsymbol{y}_{i}^{\star}\right)$. Let 
\[
\boldsymbol{\alpha}_{x}^m=\boldsymbol{W}_{x}^{T}\boldsymbol{f}_{m}\in\mathbb{R}^{d\times1}
\]
be the expansion coefficients of $\boldsymbol{f}_{m}$ (obtained by Algorithm \ref{alg:2views}) in the basis $\boldsymbol{W}_{x}$.
Recall that $\boldsymbol{W}_{x}$ consists of eigenvectors of $\boldsymbol{K}_{x}$, as defined in Section \ref{sec:pre}, that is, $\boldsymbol{K}_{x}\boldsymbol{W}_{x}=\boldsymbol{W}_{x}\boldsymbol{\Lambda}_{x}$. Using the Nystr\"{o}m extension, we extend the eigenvectors by: 
\[
\left[\begin{matrix}\ulcorner &  & \urcorner\\
\,\,\,\,\,\,\,\,\,\,\,\, & \boldsymbol{K}_{x} & \,\,\,\,\,\,\,\,\,\,\,\,\\
\llcorner &  & \lrcorner\\
\text{---}\hspace{-0.2cm} & \boldsymbol{k}_x^{\star} & \hspace{-0.2cm}\text{---}
\end{matrix}\right]\left[\begin{matrix}\\
\,\,\,\,\,\, & \boldsymbol{W}_{x} & \,\,\,\,\,\,\\
\\
\end{matrix}\right]=\left[\begin{matrix}\ulcorner &  & \urcorner\\
\,\,\,\,\,\,\,\,\,\,\,\, & \boldsymbol{W}_{x} & \,\,\,\,\,\,\,\,\,\,\,\,\\
\llcorner &  & \lrcorner\\
\text{---}\hspace{-0.2cm} & \boldsymbol{w}_x^{\star} & \hspace{-0.2cm}\text{---}
\end{matrix}\right]\left[\begin{matrix}\\
\,\,\,\,\,\, & \boldsymbol{\Lambda}_{x} & \,\,\,\,\,\,\\
\\
\end{matrix}\right],
\]
where $\left[\begin{matrix}\text{---}\hspace{-0.2cm} & \boldsymbol{k}_x^{\star} & \hspace{-0.2cm}\text{---}\end{matrix}\right]\in\mathbb{R}^{1\times d}$ and $\boldsymbol{k}_x^{\star}\left[j\right]=\exp\left(-\frac{\left\Vert \boldsymbol{x}^{\star}_{i}-\boldsymbol{x}_{j}\right\Vert _{2}^{2}}{2\sigma_{x}^{2}}\right)$ is computed from the existing and new data.
In other words, the new coordinates of the basis vectors (eigenvectors) are given by
\[
\boldsymbol{w}_x^{\star}={\boldsymbol{k}}_{x}^{\star}\boldsymbol{W}_{x}\boldsymbol{\Lambda}_{x}^{-1}\in\mathbb{R}^{1\times d},
\]
and the induced out-of-sample extension is given by
\begin{equation*}
\boldsymbol{f}^{\star}_m\left[i\right]=\boldsymbol{w}_x^{\star}\boldsymbol{\alpha}_{x}^m.
\end{equation*}
Note that the procedure above is based on the computation of $\boldsymbol{k}_x^*$ using $\boldsymbol{x}_i^*$ and $\{\boldsymbol{x}_i\}_{i=1}^N$. 
Similarly, this procedure can be implemented based on $\boldsymbol{y}^{\star}_{i}$ and $\{\boldsymbol{y}_i\}_{i=1}^N$.
Combining these two alternatives based on the constraint that $\boldsymbol{f}^{\star}_{m}\left[i\right]=f_{m}^{x}\left(\boldsymbol{x}^{\star}_{i}\right)=f_{m}^{y}\left(\boldsymbol{y}^{\star}_{i}\right)$ gives rise to the following extension
$$\boldsymbol{f}^{\star}_{m}\left[i\right]=\frac{1}{2}\left(\boldsymbol{w}_{x}^{\star}\boldsymbol{\alpha}_{x}^{m}+\boldsymbol{w}_{y}^{\star}\boldsymbol{\alpha}_{y}^{m}\right).$$
Considering other combinations, for instance, taking a weighted mean value, will be subject of future work.
The entire out-of-sample extension algorithm is given in Algorithm \ref{alg:OOSE}. 

\begin{algorithm}[t]
\textbf{\uline{Input}}\textbf{:} A set of new observations $\left\{ \left({\boldsymbol{x}}^{\star}_{i},{\boldsymbol{y}}^{\star}_{i}\right)\right\}_{i=1}^{N^{\star}}$.

\textbf{\uline{Output}}\textbf{:} The out-of-sample extension $f_{m}^{x}({\boldsymbol{x}}^{\star}_{i})$
and $f_{m}^{y}({\boldsymbol{y}}^{\star}_{i})$ for all $i$.
\begin{enumerate}
\item Compute the coefficients: $\boldsymbol{\alpha}_{x}^{m}=\boldsymbol{W}_{x}^{T}\boldsymbol{f}_{m}\in\mathbb{R}^{d\times 1}$

\item Extend the eigenvectors: ${\boldsymbol{W}}^{\star}_{x}={\boldsymbol{K}}^{\star}_{x}\boldsymbol{W}_{x}\boldsymbol{\Lambda}_{x}^{-1}\in\mathbb{R}^{N^{\star}\times d}$

where ${\boldsymbol{K}}^{\star}_{x}\in\mathbb{R}^{N^{\star}\times N}$ such
that ${\boldsymbol{K}}^{\star}_{x}\left[i,j\right]=\exp\left(-\frac{\Vert {\boldsymbol{x}}^{\star}_{i}-\boldsymbol{x}_{j}\Vert _{2}^{2}}{2\sigma_{x}^{2}}\right)$
\item Repeat Steps 1 and 2 for the $\left\{ \boldsymbol{y}^{\star}_{i}\right\} $ observations.
\item Set ${\boldsymbol{f}}^{\star}_{m}=\frac{1}{2}({\boldsymbol{W}}^{\star}_{x}\boldsymbol{\alpha}_{x}^{m}+{\boldsymbol{W}}^{\star}_{y}\boldsymbol{\alpha}_{y}^{m})\in\mathbb{R}^{N^{\star}}$, such that $f_{m}^{x}({\boldsymbol{x}}^{\star}_{i})=f_{m}^{y}({\boldsymbol{y}}^{\star}_{i})={\boldsymbol{f}}^{\star}_{m}[i]$.
\end{enumerate}
\caption{Out-of-sample extension}
    \label{alg:OOSE}
\end{algorithm}

\section{Experimental Results}
    \label{sec:Exp}
Our source code for all experiments will be publicly available upon acceptance.
\subsection{Toy Problem}
    \label{sub:toy}
We revisit the toy problem presented in Section \ref{sec:ProblemFormulation} and start by giving more details. 
Consider triplets $\left(z_{i},\epsilon_{i},\eta_{i}\right)\sim U\left[0,1\right]^{3}$ that are i.i.d. and uniformly distributed in the unit cube.
The $2$D spiral in $\mathbb{R}^2$ is given by  
\begin{equation}
    \label{eq:spiral} \renewcommand*{\arraystretch}{1.1}
\boldsymbol{x}_{i}=g(z_{i},\epsilon_{i})=\left[\begin{matrix}\left(\frac{3}{2}\epsilon_{i}+\frac{z_{i}}{3}+\frac{2}{3}\right)\cos(4\pi\epsilon_{i})\\
\left(\frac{3}{2}\epsilon_{i}+\frac{z_{i}}{3}+\frac{2}{3}\right)\sin(4\pi\epsilon_{i})
\end{matrix}\right],
\end{equation}
such that $z_i$ controls the width of the ribbon. The $2$-torus in $\mathbb{R}^3$ is given by 
\begin{equation}
    \label{eq:torus} \renewcommand*{\arraystretch}{1.1}
\boldsymbol{y}_{i}=h(z_{i},\eta_{i})=\left[\begin{matrix}\left(1+\frac{1}{3}\cos(2\pi z_{i})\right)\cos(2\pi\eta_{i})\\
\left(1+\frac{1}{3}\cos(2\pi z_{i})\right)\sin(2\pi\eta_{i})\\
\frac{1}{3}\sin(2\pi z_{i})
\end{matrix}\right],
\end{equation}
such that $z_i$ controls the smaller angle.
Note that $z_i$ is a common variable observed by both observation functions $g$ and $h$, whereas $\epsilon_i$ and $\eta_i$ are variables captured by only a single observation.

\sloppy We generate the set $\{ (z_{i},\epsilon_{i},\eta_{i} ) \} _{i=1}^{N}$ consisting of $4,100$ realizations of the triplets $(z_{i},\epsilon_{i},\eta_{i}) \sim U[0,1]^{3}$.  We keep $N^{\star}=100$ points aside for the out-of-sample extension validation and construct the two views $\boldsymbol{x}_{i}=f(z_{i},\epsilon_{i} )$ and $\boldsymbol{y}_{i}=g (z_{i},\eta_{i})$ as in \eqref{eq:spiral} and \eqref{eq:torus} based on the remaining $N=4,000$ points.
Let $\boldsymbol{K}_{x}$ and $\boldsymbol{K}_{y}$ be the kernels associated with the observations $\{ \boldsymbol{x}_{i}\in\mathcal{M}_{x} \} _{i}$ and $\{ \boldsymbol{y}_{i}\in\mathcal{M}_{y} \} _{i}$, respectively.
We apply Algorithm~\ref{alg:2views}, where $\sigma_x$ and $\sigma_y$ are set to be $30\%$ of the median of the pairwise Euclidean distances, that is $\sigma_{x}=0.3\cdot\text{median}(\{ \Vert \boldsymbol{x}_{i}-\boldsymbol{x}_{j}\Vert _{2}\} _{i,j})$, and similarly for $\sigma_y$.
We also set $d=\frac{N}{4}=1,000$ and we report that any value between $500 - 2,000$ provides similar empirical results.
Figure \ref{fig:ToyNCCA}(a) presents the two manifolds.
At the top row of Figure \ref{fig:ToyNCCA}(b), we depict the top three (non-trivial) jointly smooth functions obtained by Algorithm \ref{alg:2views}, and the out-of-sample extension obtained by Algorithm \ref{alg:OOSE}. The bottom row of Figure \ref{fig:ToyNCCA}(b) displays the top three (non-trivial) left ($\boldsymbol{\phi}_i$) and right ($\boldsymbol{\psi}_i$) singular vectors obtained by NCCA \cite{michaeli2016nonparametric}.
We observe that Algorithm \ref{alg:2views} provides functions, which represent the common variable $z$ more accurately in comparison to NCCA.

\begin{figure}[h]
    \centering
    \includegraphics[width=1\columnwidth]{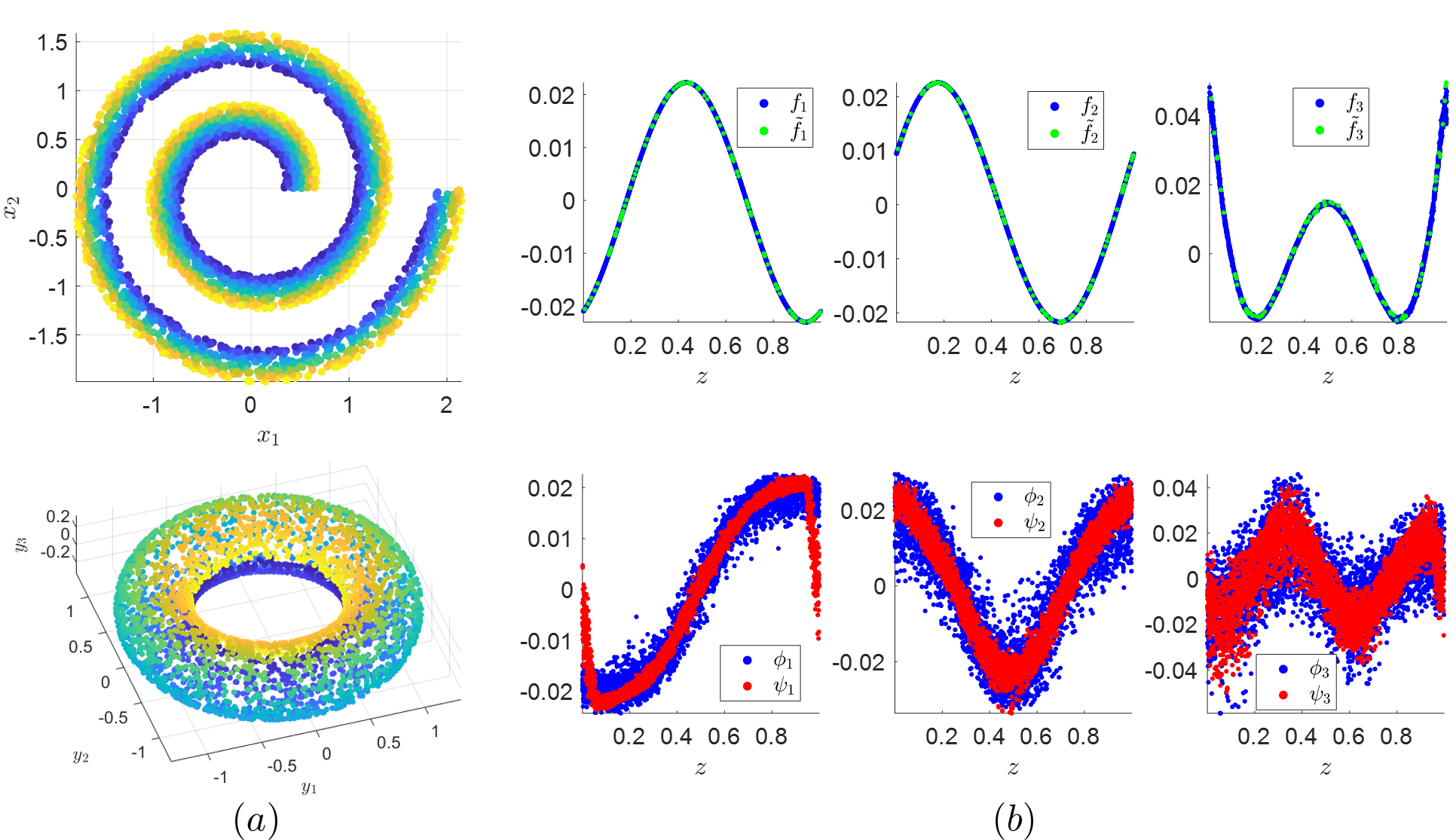}
    \caption{(a) The two manifolds colored with respect to the common variable $z$. The blue points at the top row of (b) are scatter of the top three (non-trivial) functions obtained by Algorithm \ref{alg:2views} plotted against the common variable $z$. The green points are obtained by the out-of-sample extension  Algorithm \ref{alg:OOSE}.
    The bottom row of (b) consists of the top three (non-trivial) left ($\boldsymbol{\phi}_i)$) and right ($\boldsymbol{\psi}_i)$) functions (singular vectors) obtained by NCCA plotted against the common variable $z$. Recall that in NCCA the left singular vectors correspond to the spiral $\mathcal{M}_x$ and the right singular vectors correspond to the torus $\mathcal{M}_y$.}
    \label{fig:ToyNCCA}
\end{figure}

Figure \ref{fig:ToyM} depicts the top $8$ jointly smooth functions obtained by Algorithm~\ref{alg:2views}. Above each plot we present the smoothness value $\Vert \boldsymbol{W}_{x}^{T}\boldsymbol{f}_{i}\Vert _{2}^{2}$ and compare it to the threshold $E_0 =0.9558$ obtained using the jackstraw method as discussed in Section \ref{sub:ChooseM}. We observe that indeed the threshold allows us to accurately detect the number of components $M$ correlated with the common variable $z$. 

\begin{figure}[h]
    \centering
    \includegraphics[width=1\columnwidth]{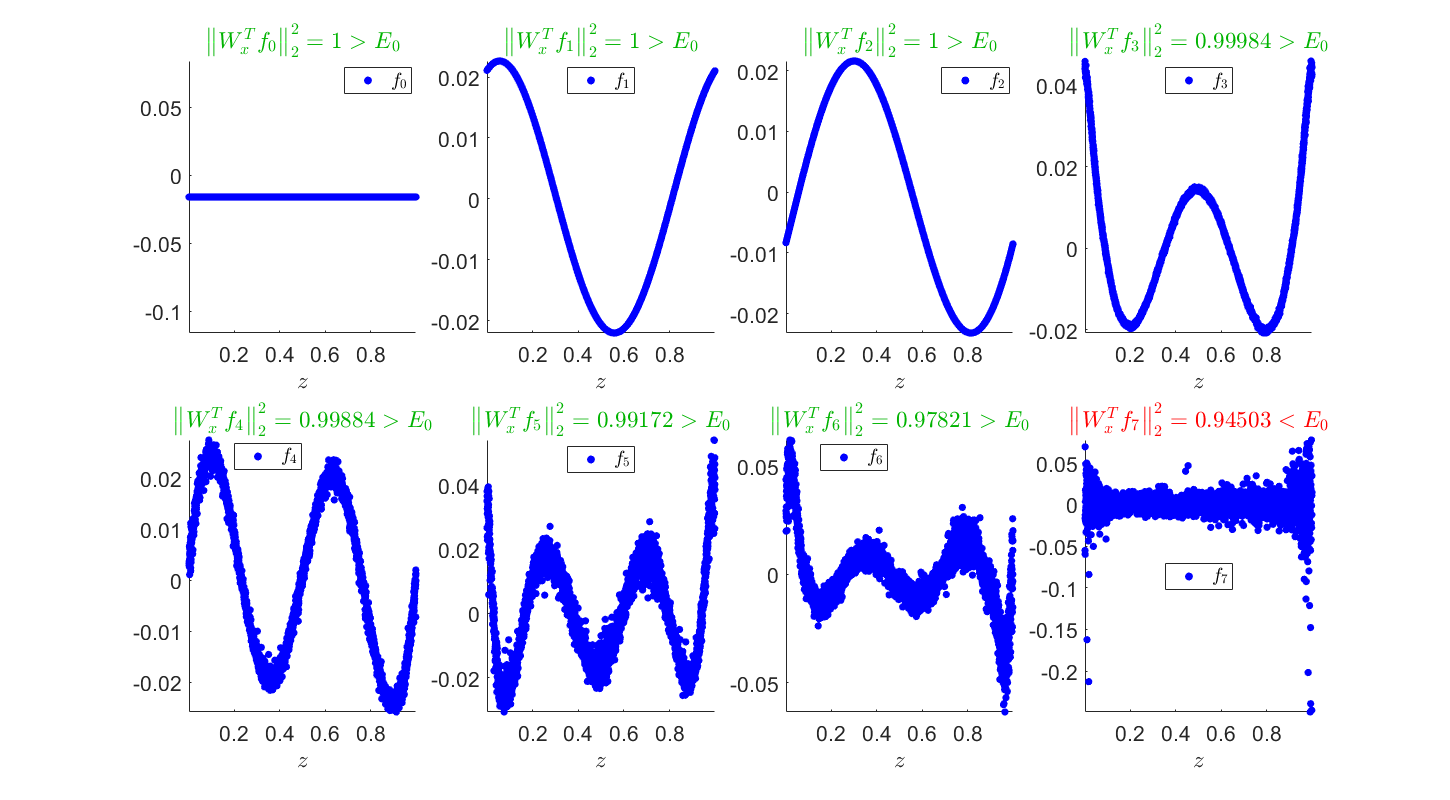}
    \caption{The top $8$ jointly smooth functions obtained by Algorithm \ref{alg:2views} plotted against the common variable $z$. Above each plot we depict the smoothness value $\left\Vert \boldsymbol{W}_{x}^{T}\boldsymbol{f}_{i}\right\Vert _{2}^{2}$ and compare it to the threshold $E_0 =0.9558$ obtained by the jackstraw method described in Section \ref{sub:ChooseM}.}
    \label{fig:ToyM}
\end{figure}

In Figure~\ref{fig:efficiency_total_seconds}, we demonstrate that the time and memory complexity of the algorithm actually follows the predicted curves. The data was created for the given toy problem, with varying number of points.
 In Figure~\ref{fig:efficiency_eigenvectors_seconds}, the number of points in the data set was fixed to $N=50000$, and we demonstrate how the algorithms scale when the number of eigenvectors $d$ per kernel is changed. As expected, both memory and computational time scale linearly.
 \begin{figure}[h]
 \centering
\begin{tabular}{cc}
 \includegraphics[width=0.45\textwidth]{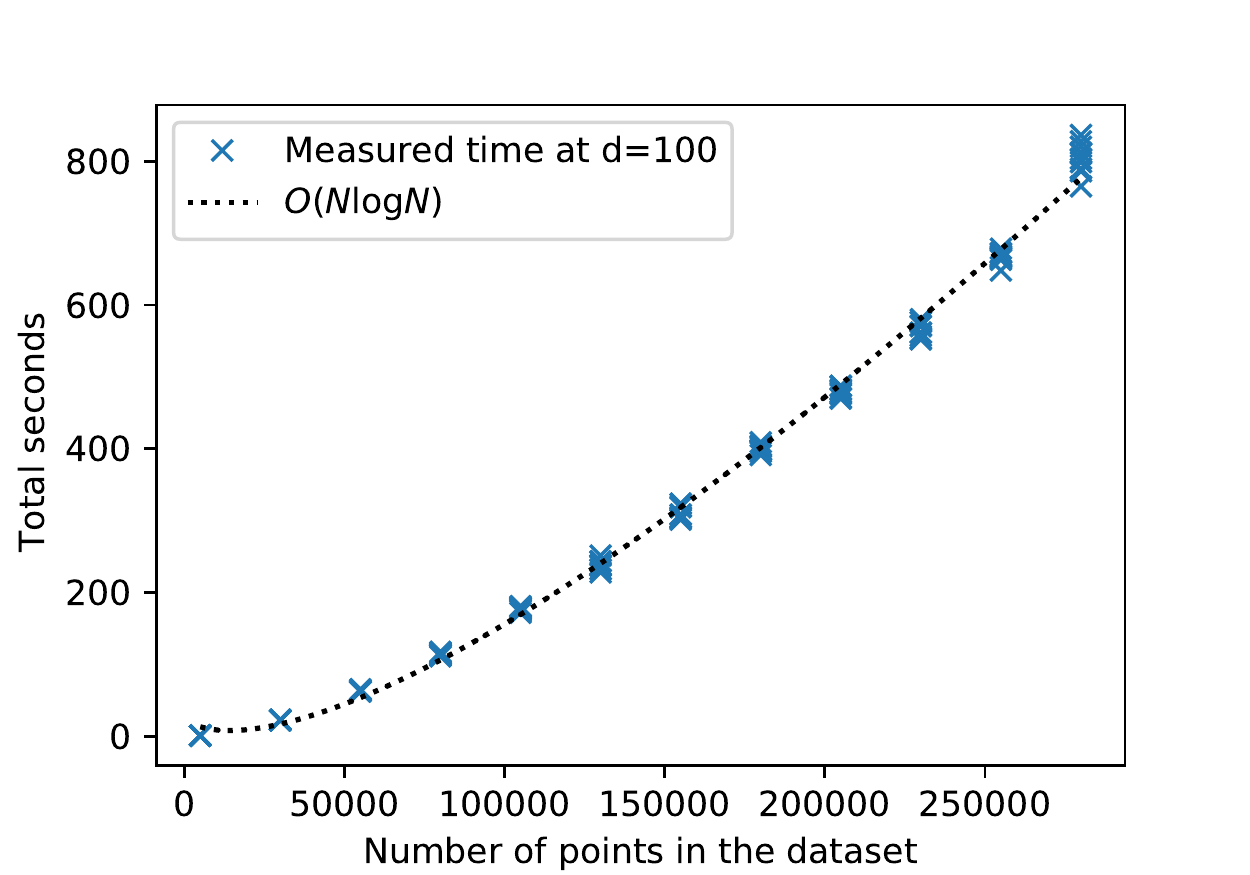}&
 \includegraphics[width=0.45\textwidth]{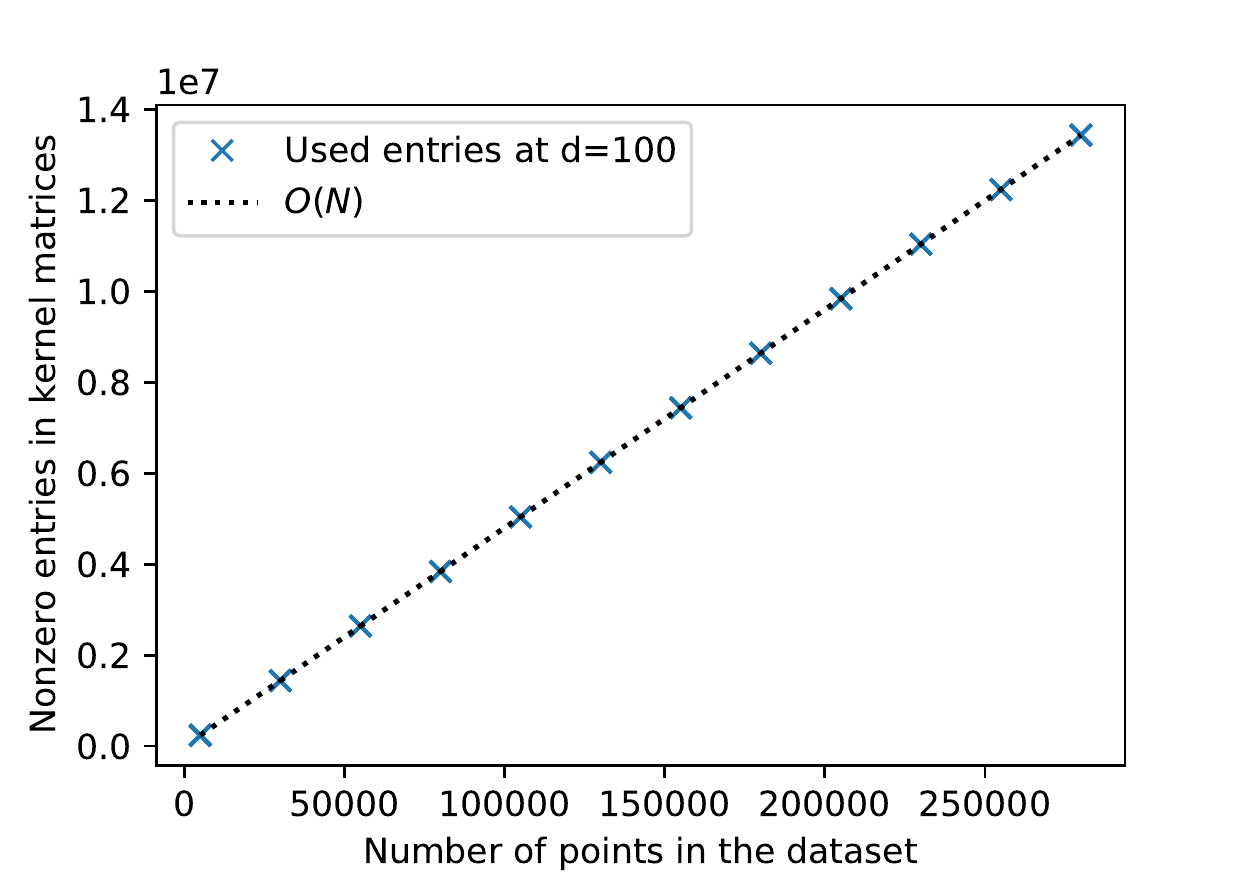}
\\
(a)&(b)
\end{tabular}
 \caption{\label{fig:efficiency_total_seconds}Scaling of (a) computation time and (b) memory, with the number of points $N$ in the dataset changing at a constant number of eigenvectors $d=100$, for algorithm~\ref{alg:2views} with efficient kernel and SVD algorithms. The actual curve fitted to the data in panel (a) is $0.0013455 N log(N) - 0.0141995 N + 26.9836555$. Every experiment was repeated 10 times.}
 \end{figure}
 
 \begin{figure}[h]
 \centering
\begin{tabular}{cc}
 \includegraphics[width=0.45\textwidth]{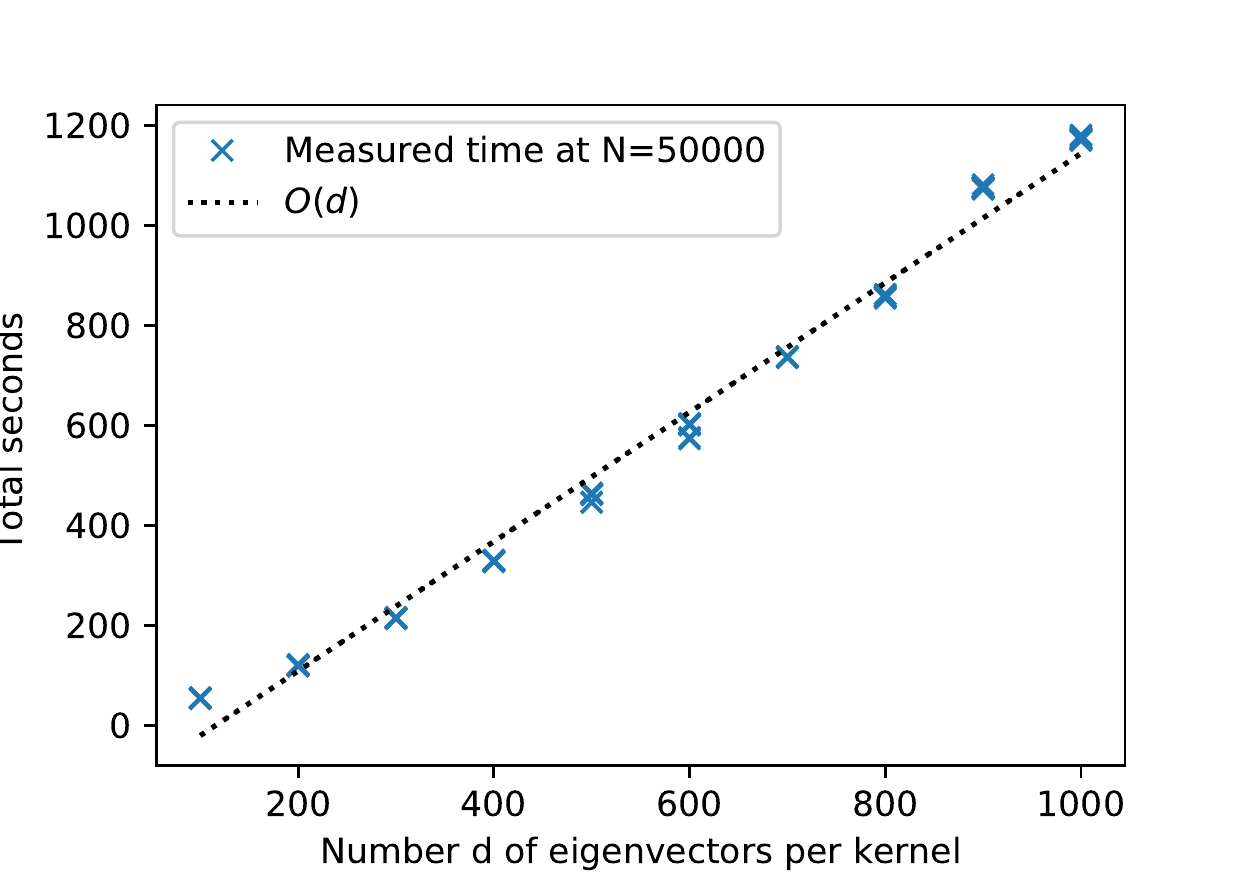}&
 \includegraphics[width=0.45\textwidth]{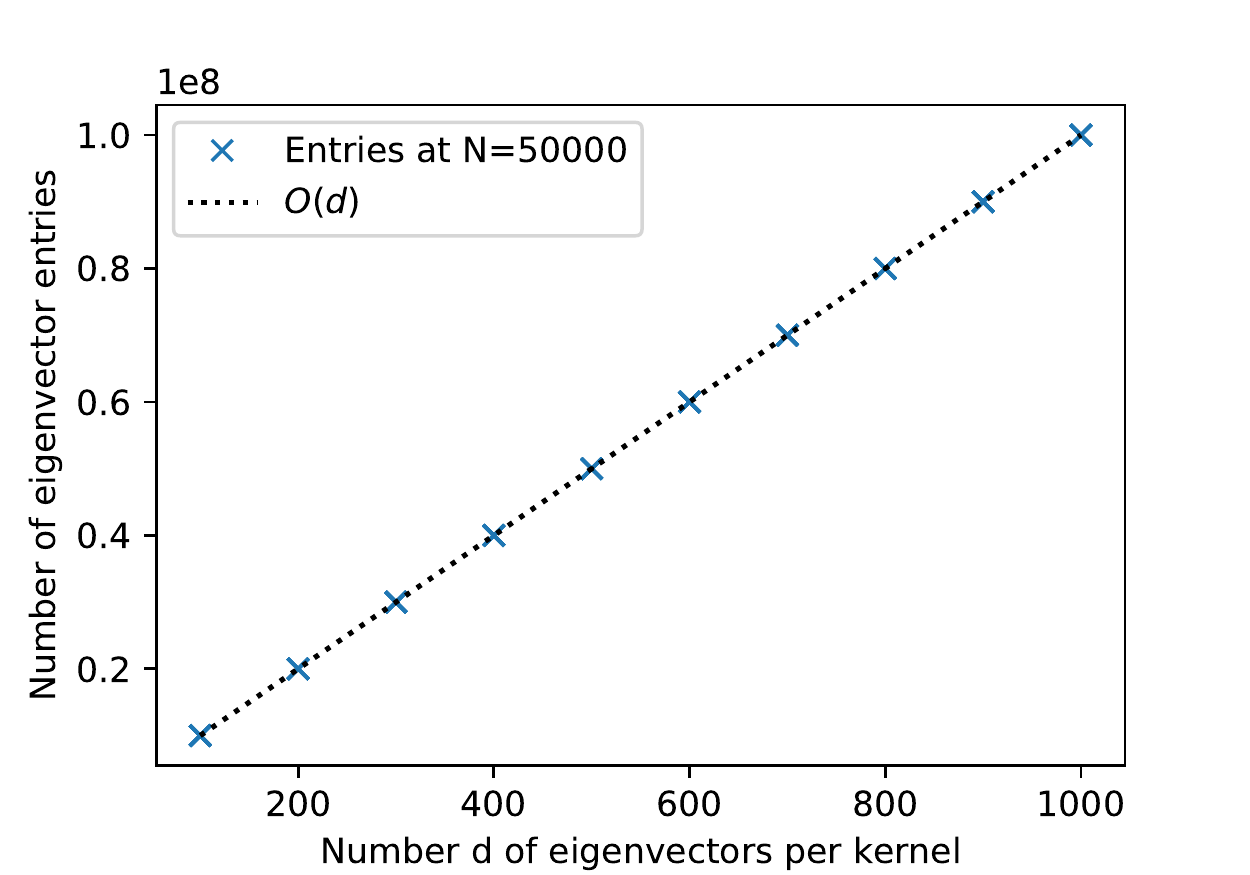}
\\
(a)&(b)
\end{tabular}
 \caption{\label{fig:efficiency_eigenvectors_seconds}Scaling of (a) computation time and (b) memory, with the number of eigenvectors $d$ per kernel at $N=50000$, for algorithm~\ref{alg:2views} with efficient kernel and SVD algorithms. The actual curve fitted to the data in panel (a) is $1.2945669 d -150.0280786$. Every experiment was repeated 10 times.}
 \end{figure}

\subsection{Sleep Stage Identification}
    \label{sub:sleep}
We apply Algorithm \ref{alg:multi} (multi-view) to real physiological signals, addressing the problem of sleep stage identification.
The data are available online~\cite{goldberger2000physiobank} and described in detail in \cite{terzano2002atlas}. 
The data contain multimodal recordings from several subjects, where each subject is recorded for about $10$ hours during a single night sleep.
The data were analyzed by a human expert and divided into six sleep stages: awake, Rapid Eye Movement (REM), stage 1 (shallow sleep), stage 2, stage 3, and stage 4 (deep sleep).
The types of sensors used for recording vary between the different subjects. Here, we use a subset of six sensors common to all reported subjects: four electroencephalogram (EEG) channels, one Electromyography (EMG) channel, and one Electrocardiography (ECG) channel, sampled at $512$~Hz.

Let $\boldsymbol{K}_{i}^{\left(k\right)}$ be the kernel constructed from the $i$th channel of the $k$th subject. For details on the construction of the kernels from the measured data, see the SM.
%
%
We apply Algorithm \ref{alg:multi} to the six kernels with $d = 1,000$ and obtain the top $M = 20$ common functions $\boldsymbol{f}_m$. We note that about $50$ functions could be used but we take only the top $20$ to make a fair comparison with KCCA, which attains only $20$ relevant canonical directions.   

Figure \ref{fig:Sleep}(a) presents the scatter of the top two (non-trivial) functions obtained by Algorithm \ref{alg:multi} applied to Subject 2. We observe that solely from the top two jointly smooth functions, we obtain a meaningful representation of the sleep stages. Namely, shallow sleep stages reside on the left side of the scatter plot and deep sleep stages reside to the right. We quantify the (unsupervised) separation obtained by the top $20$ jointly smooth functions using a kernel SVM classifier equipped with a Gaussian kernel. The $10$-fold cross validation confusion matrix is displayed in Figure \ref{fig:Sleep}(b). 
For comparison, we apply a multi-view KCCA \cite{yu2013kernel} to the same six kernels. Figure \ref{fig:Sleep}(c) presents the scatter of the top two (non-trivial) eigenvectors obtained by KCCA applied to Subject 2. Figure \ref{fig:Sleep}(d) displays the $10$-fold confusion matrix obtained using a kernel SVM applied to the top $20$ eigenvectors attained by KCCA. As can be seen both qualitatively and quantitatively, the jointly smooth functions obtained using Algorithm \ref{alg:multi} better parametrize the sleep stages. We also compare the classification results to: (i) the representation obtained by each kernel (sensor) separately, and (ii) the representation obtained by concatenating all six feature vectors into one vector. We repeat this experiment for $4$ subjects. Table \ref{tbl:sleep} summarizes the results. In each row, we mark in bold the the highest obtained accuracy, which is consistently achieved by the proposed algorithm.
We note that that we compare to KCCA rather than NCCA because NCCA is restricted to two views.

\begin{figure}[h]
    \centering
    \includegraphics[width=1\columnwidth]{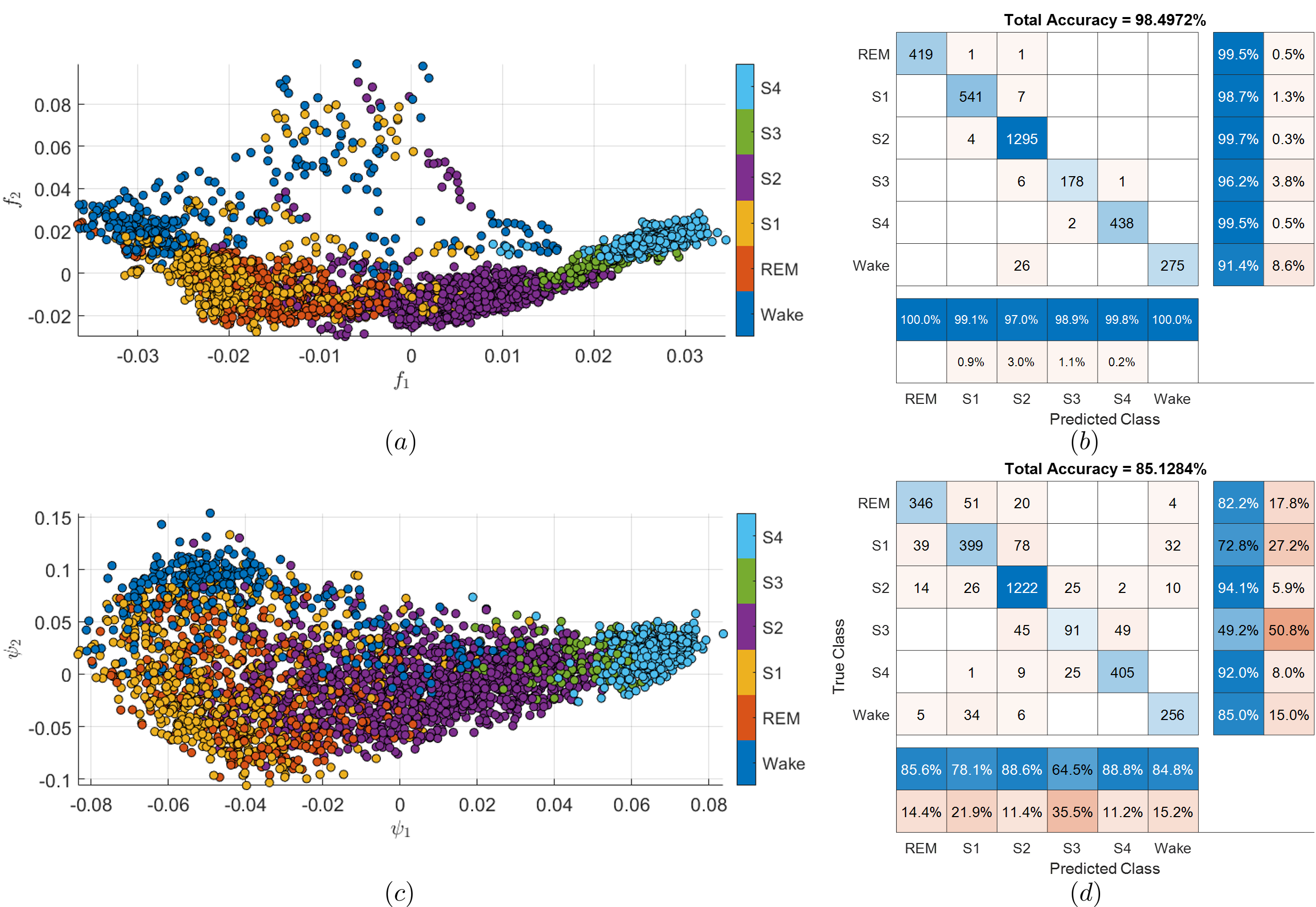}
    \caption
    {
        (a) A scatter plot of the top two jointly smooth functions obtained by Algorithm \ref{alg:multi}. The points are colored according to the different sleep stages as identified by a human expert.
        (b) The confusion matrix obtained by a 10-fold kernel-SVM classifier applied to the top $M=20$ jointly smooth functions obtained by Algorithm \ref{alg:multi}.
        (c) A scatter plot of the top two principal directions obtained by a multi-view KCCA equipped with the same kernels as in (a) and colored according to the sleep stage.
        (d) The confusion matrix obtained by a 10-fold kernel-SVM classifier applied to the top $M=20$ principal directions obtained by multi-view KCCA.
    } 
    \label{fig:Sleep}
\end{figure}

\bgroup
\setlength{\tabcolsep}{3pt}
\renewcommand{\arraystretch}{1.3}
\begin{table}[t]
\caption{Sleep stage identification accuracy {[}\%{]}} 
    \label{tbl:sleep}
\begin{centering}
\begin{tabular}{cccccccccc}
\hline 
\rowcolor{TitleGray}
\textbf{} & \textbf{EEG1} & \textbf{EEG2} & \textbf{EEG3} & \textbf{EEG4} & \textbf{EMG} & \textbf{ECG} & \textbf{All} & \textbf{KCCA} & \textbf{Alg. \ref{alg:multi}}
\tabularnewline \hline 
\textbf{Subject 1} & 78.73 & 78.54 & 76.03 & 74.77 & 69.34 & 81.71 & 96.27 & 88.61 & \textbf{97.52}
\tabularnewline \rowcolor{RowGray}
\textbf{Subject 2} & 80.18 & 78.02 & 78.33 & 76.61 & 79.05 & 79.21 & 93.17 & 85.12 & \textbf{98.49}
\tabularnewline
\textbf{Subject 3} & 78.68 & 77.61 & 79.74 & 81.53 & 79.84 & 76.30 & 89.67 & 81.59 & \textbf{98.18}
\tabularnewline \rowcolor{RowGray}
\textbf{Subject 4} & 75.39 & 78.87 & 76.55 & 77.52 & 76.83 & 70.76 & 94.99 & 86.60 & \textbf{97.68}
\tabularnewline \hline 
\hline \rowcolor{RowGray}
\textbf{Mean} & 78.25 & 78.26 & 77.66 & 77.61 & 76.27 & 77.00 & 93.53 & 85.48 & \textbf{97.97}
\tabularnewline \hline 
\end{tabular}
\par\end{centering}
\end{table}
\egroup

\subsection{Manifold learning for high-dimensional input data}\label{sec:highdim}

We now demonstrate how to obtain jointly smooth functions and a common latent space in an experiment involving high-dimensional observations.
In this experiment, two cars drive clockwise around a single track, both with their individual but constant speed. One set of observations is gathered with a camera located on one of the cars, another data set is gathered with a camera observing the entire track.
Figure~\ref{fig:carrace_setting} shows the setup for this experiment, together with four example snapshots of the two datasets.

 We simultaneously gather 11100 frames with $400\times 300$ gray-scale pixels from both cameras while the two cars drive at constant speed around the track, with 100 frames per loop for the blue car, and 111 frames per loop for the yellow car.
We pre-process the video data with the following pipeline:
\begin{enumerate}
\item Scale down each frame to $15\times 15$ pixels using \texttt{PIL.Image.NEAREST} resampling from python \texttt{pillow}.
\item Project the data onto its first $10$ principal components.
\item Delay embed each data point by concatenating the subsequent $150$ points of the time series to it, forming new data points of $10\times 151$ dimensions.
\item Flatten the data to obtain vectors of dimension $10\times 151=1510$.
\item Again project the data onto its principal components, and use the first five as a new embedding.
\end{enumerate}
We then apply Algorithm~\ref{alg:2views} to the two datasets to obtain ten jointly smooth functions (see Figure~\ref{fig:carrace_positions_evecs}).
Applying Diffusion Maps on these ten jointly smooth features reveals the periodic and intrinsically one-dimensional structure of the common part between both video streams. The structure originates from the blue car looping around the track, which is common to both sets of observations.
The jointly smooth functions (or, alternatively, the Diffusion Maps embedding) can be used to segment the blue car in the overhead camera images, by simply averaging all collected frames that have similar embedding points (see Figure~\ref{fig:carrace_segmentation}).
\begin{figure}[h]
    \centering
    \includegraphics[height=.19\textheight]{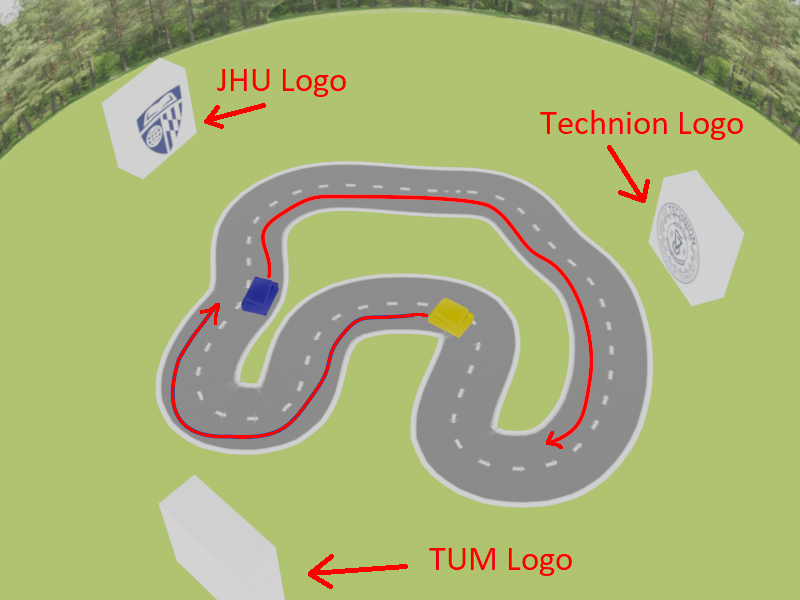}
    \includegraphics[height=.19\textheight]{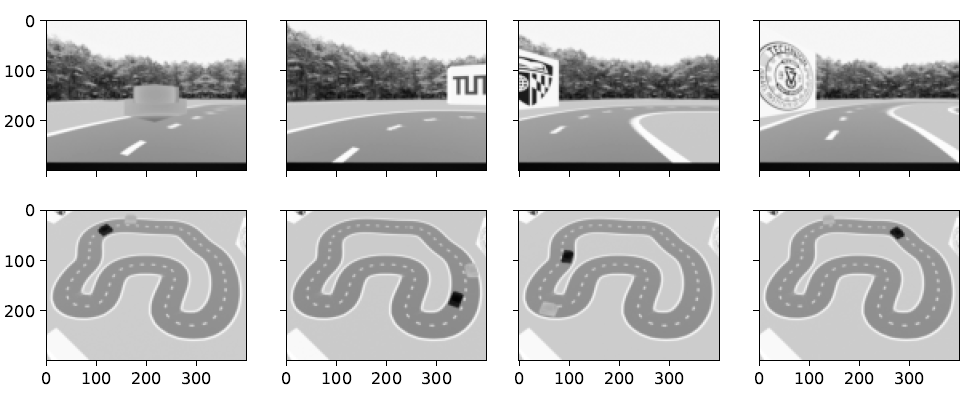}
    \caption{\label{fig:carrace_setting}Left panel: Setting for the car race example. Two cars (blue and yellow) drive clockwise around a single track, both with constant speed (yellow: 100 frames per loop, blue: 111 frames per loop). Right panel: One set of observations is gathered with a camera located on the blue car (grayscale, top row with four example snapshots), another set with a camera observing the entire track (grayscale, bottom row, with corresponding example snapshots).}
\end{figure}
\begin{figure}[h]
    \centering
    \includegraphics[width=1\textwidth]{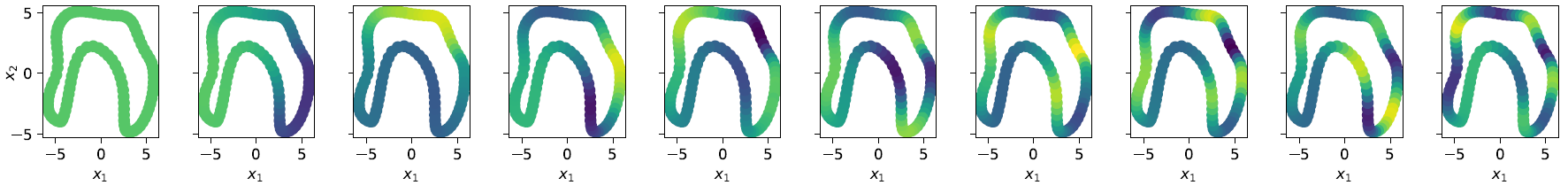}
    \caption{\label{fig:carrace_positions_evecs}Jointly smooth functions plotted over the positions of the blue car. Even though there are several smooth functions available, the underlying manifold is low-dimensional, and manifold learning can reveal this in a subsequent analysis (see Figure~\ref{fig:carrace_evecs_positions}).}
\end{figure}

\begin{figure}[h]
    \centering
    \includegraphics[height=.19\textheight]{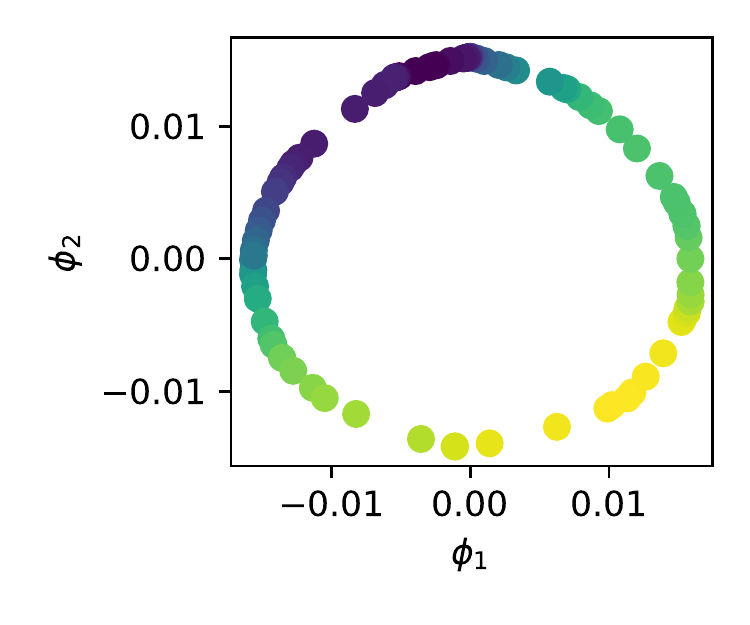}
    \includegraphics[height=.19\textheight]{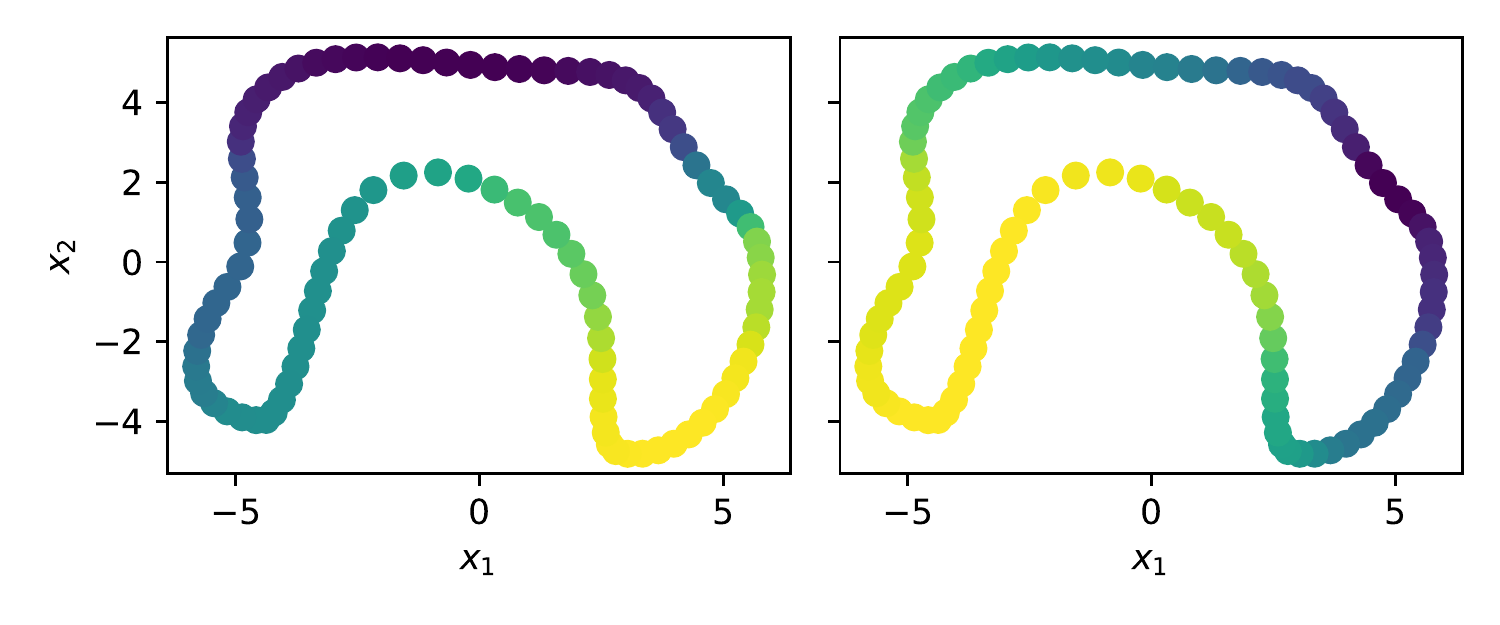}
    \caption{\label{fig:carrace_evecs_positions}Left panel: Diffusion Maps embedding $(\phi_1,\phi_2)$ of the first $10$ jointly smooth functions, colored by the $x_1$-coordinate of the blue car on the track. Right panel: the positions of the blue car (not part of the dataset), colored by the two Diffusion Map coordinates $\phi_1$ and $\phi_2$.}
\end{figure}
\begin{figure}[h]
    \centering
    \includegraphics[height=.19\textheight]{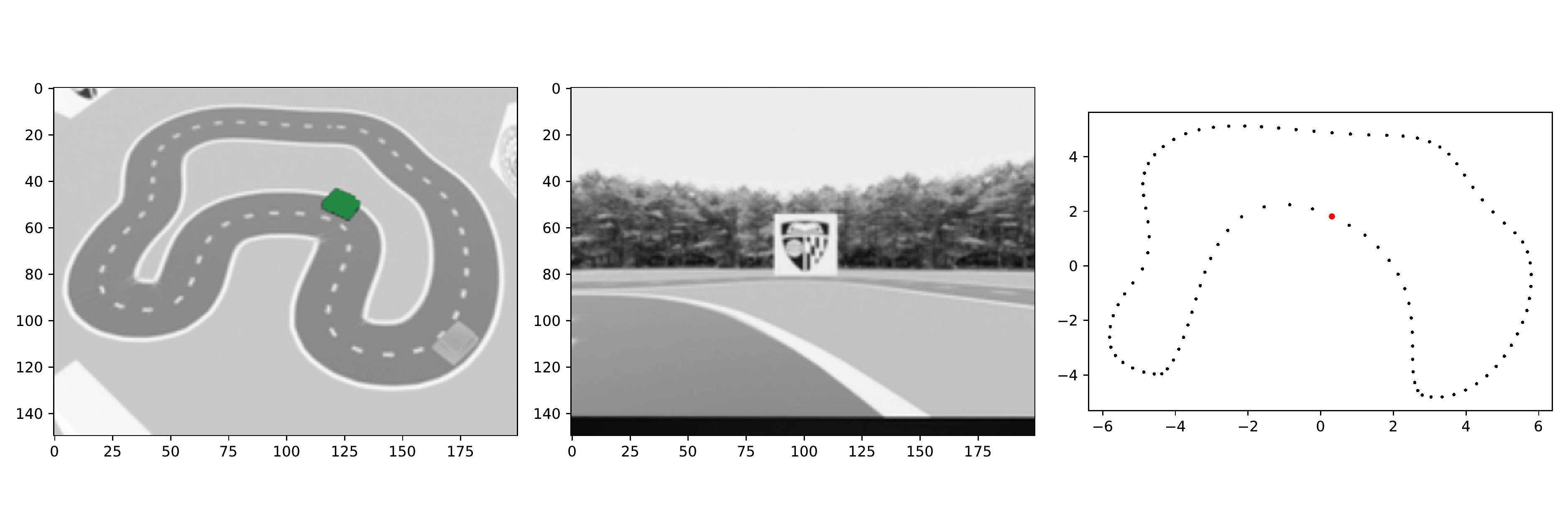}
    \caption{\label{fig:carrace_segmentation}Image segmentation using $k$-nearest neighbor search with jointly smooth functions. The blue car is correctly identified.}
\end{figure}

\subsection{Application to Nonlinear Dynamical Systems Analysis}
    \label{sub:DynamicalSys}

\begin{figure}[h]
\centering
\includegraphics[width=.8\textwidth]{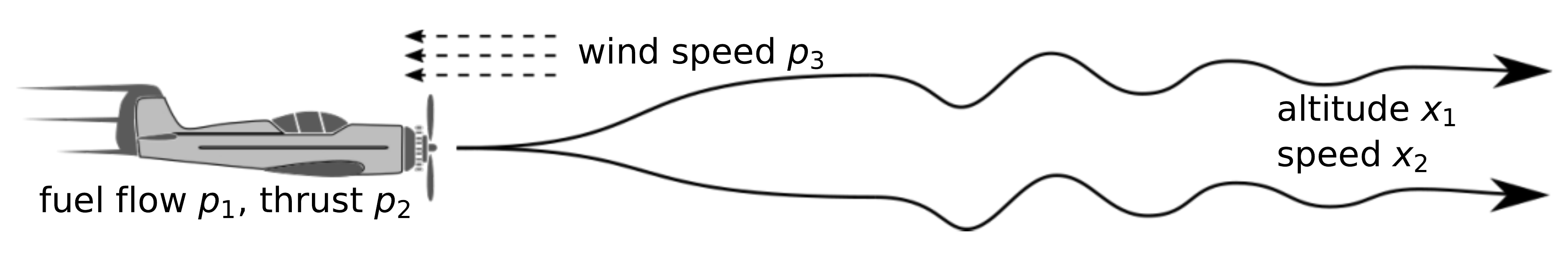}
\caption{\label{fig:airplaneflight}Illustration for our toy model: An airplane flies at an altitude $x_1(t)$, with speed $x_2(t)$. The pilot can influence fuel flow $p_1$ and thrust $p_2$, but not the wind speed $p_3$.}
\end{figure}
    
Constructing minimal realizations of parameter spaces for nonlinear dynamical systems from observations is a long-standing problem~\cite{gutenkunst-2007}. %
Here, we show that jointly smooth functions can be used to identify effective parameters and corresponding steady states, essentially constructing an effective bifurcation diagram for the dynamical system through level set parametrization. 
For an alternative approach to this, see~\cite{holiday-2019}.
%
%
%
Figure~\ref{fig:airplaneflight} illustrates the dynamical system we study in this example. An aircraft in flight is modeled by a nonlinear dynamical system for its altitude $x_1$ and speed $x_2$. The system dynamics is governed by three parameters $p_1,p_2,p_3$ (interpreted as fuel flow, thrust, and wind speed):
\begin{equation}
    \label{eq:sys}
\mathbb{R}^{2}\ni\dot{\boldsymbol{x}}=h_{p_{1},p_{2},p_{3}}(\boldsymbol{x})=J(s(\boldsymbol{x}))\cdot g_{p_{1},p_{2},p_{3}}(s(\boldsymbol{x})).
\end{equation}
The function $h$ describes the behavior of the plane as it oscillates around a target altitude $x_1(t\to\infty)$ and target speed $x_2(t\to\infty)$ given by $s^{-1}(p_1+p_2^3,p_3)$. In this example, we define $h$ through a nonlinear transform $\boldsymbol{y}=s(\boldsymbol{x})$ of a damped, linear pendulum $\dot{\boldsymbol{y}}=g(\boldsymbol{y})$, as 
\begin{equation*}
J(\boldsymbol{y})=\left[\begin{matrix}1 & -2y_{2}\\
-2y_{1}+2y_{2}^{2}  & \quad 1+4y_{1}y_{2}-4y_{2}^{3}
\end{matrix}\right],\quad s(\boldsymbol{x})=\left[\begin{matrix}x_{1}+x_{1}^{4}+2x_{1}^{2}x_{2}+x_{2}^{2}\\
x_{1}^{2}+x_{2}
\end{matrix}\right],
\end{equation*}
and
\begin{equation}
\label{eq:sys_g}    
g_{p_{1},p_{2},p_{3}}(\boldsymbol{y})=\left[\begin{matrix}-2 & 1\\
-1 & -1
\end{matrix}\right]\left[\begin{matrix}y_{1}-\left(p_{1}+p_{2}^{3}\right)\\
y_{2}-p_{3}
\end{matrix}\right].
\end{equation}
Figure \ref{fig:DynamicalSys}(a) presents the phase portraits of the system given $(p_{1},p_{2},p_{3})=(0.1,-0.2,0.2)$ and $(p_{1},p_{2},p_{3})=(0.2,0.3,-0.1)$ at the top and bottom, respectively.
By \eqref{eq:sys_g}, despite having three controlling parameters, the system dynamics depend only on $p_3$ and $p_1 + p_2^3$.

Suppose that the system dynamics, here represented by $h$, are unknown. 
Furthermore, suppose we can only control fuel flow $p_1$ and thrust $p_2$, but not the wind speed $p_3$. For each pair $(p_1,p_2)$, we choose a random $p_3$, simulate the system, and then observe the aircraft's steady-state altitude and speed (in the limit $t\rightarrow\infty$).

The identification of an effective parameter is based on the application of Algorithm \ref{alg:2views} to
the (accessible) parameter space $(p_1,p_2)$ and the collection of observed steady states $(x_1,x_2)$.
For demonstration purposes, we generate $N=2,000$ points $\{ (p_{1},p_{2},p_{3})_{i} \} _{i=1}^{N}$ in the parameter space, uniformly distributed in $[-1,1]^{3}$. For each triplet, we simulate the system \eqref{eq:sys} until convergence to a steady-state point $\left(x_{1},x_{2}\right)_{i}$.
In our analysis, we first construct two kernels: $\boldsymbol{K}_{p}\in\mathbb{R}^{N\times N}$ on the accessible parameters $\left(p_{1},p_{2}\right)_{i}$, and $\boldsymbol{K}_{x}\in\mathbb{R}^{N\times N}$ on the corresponding steady states $\left(x_{1},x_{2}\right)_{i}$. Then, we apply Algorithm \ref{alg:2views} to the two kernels with $d=500$, resulting in the most jointly smooth $M=3$ functions $\boldsymbol{f}_m$.
At the top row of Fig.~\ref{fig:DynamicalSys}(b) we display scatter plots of the top three jointly smooth functions $\boldsymbol{f}_m$ as functions of the unknown combination of parameters $p_1+p_2^3$, where we observe a distinct correspondence. To learn this combination, we plot the scatter of $p_1$ against $p_2$ and color the points according to the obtained jointly smooth functions $\boldsymbol{f}_m$ at the middle row of Fig.~\ref{fig:DynamicalSys}(b). Indeed, we observe that the level sets (marked by red curves) coincide with $p_{1}+p_{2}^{3}=C$ (up to mild boundary effects). Similarly, at the bottom row of Fig.~\ref{fig:DynamicalSys}(b), we depict the scatter plots of the observed steady-states coordinates $x_1$ and $x_2$, which are colored according to the obtained jointly smooth functions $\boldsymbol{f}_m$. We observe that by controlling the combination of parameters $p_1+p_2^3$, one can shift the steady-state along the color gradient, whereas the inaccessible parameter $p_3$ controls the steady-state location along the observed level sets.
In terms of the aircraft illustration, it implies which combinations of altitude and speed can be controlled by the accessible effective parameter.
%

\begin{figure}[h]
    \centering
    \includegraphics[width=1\columnwidth]{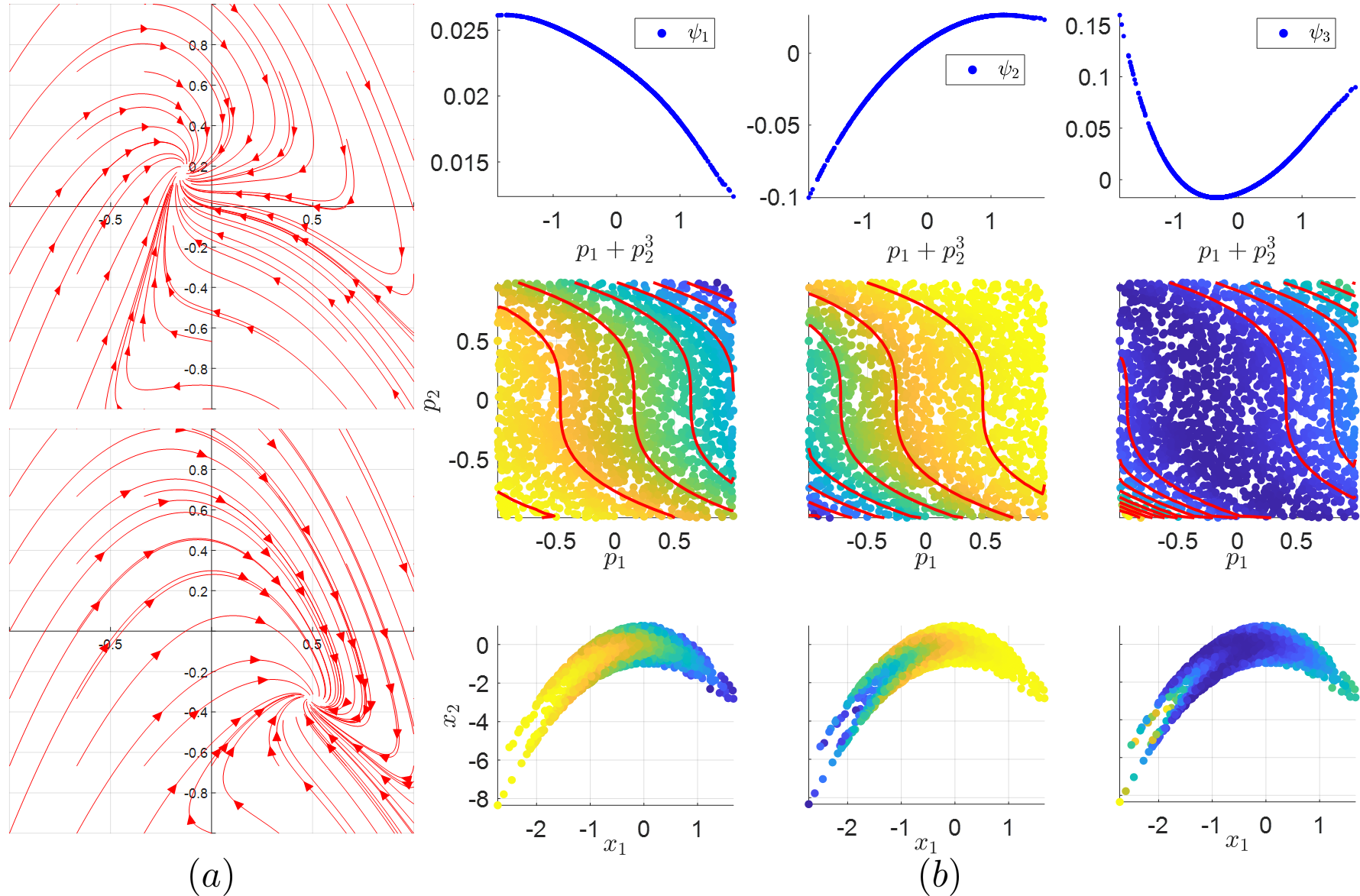}
    \caption{(a) Phase portraits of the dynamical system with $(p_1,p_2,p_3)=(0.1,-0.2,0.2)$ at the top and with $(p_{1},p_{2},p_{3})=(0.2,0.3,-0.1)$ at the bottom. (b) The top row displays the top $3$ jointly smooth functions obtained by Algorithm \ref{alg:2views} as a function of the unknown combination $p_1+p_3^3$. The middle row presents the scatter plots of $p_1$ against $p_2$ colored by the top $3$ jointly smooth functions, where the level sets are marked by red curves. The bottom row depicts the scatter plots of the observed steady-states coordinates $x_1$ and $x_2$, which are colored according to the obtained jointly smooth functions $\boldsymbol{f}_m$.}
    \label{fig:DynamicalSys}
\end{figure}

\section{Conclusions}
In this paper, we presented an approach for multimodal data analysis by introducing a notion of jointly smooth functions on manifolds. We proposed a new spectral algorithm for discovering such jointly smooth functions solely from observations and provided theoretical justification for it. We demonstrated the efficacy of our approach on simulated and real measured data, achieving superior results compared to competing methods.
We believe the generic formulation of the problem and the algorithm facilitates applications in a broad range of fields, reaching well beyond strict data analysis, e.g., in nonlinear dynamical systems analysis, as demonstrated in the paper.


%
%
%
%


\bibliographystyle{siamplain}
\bibliography{Refs}

\appendix
\section{Proof to Lemma 1}
\begin{lemma}
Consider $\boldsymbol{A},\boldsymbol{B}\in\mathbb{R}^{N\times d}$
such that $\boldsymbol{A}^{T}\boldsymbol{A}=\boldsymbol{B}^{T}\boldsymbol{B}=\boldsymbol{I}_{d}$.
Let $\boldsymbol{W}\coloneqq\left[\begin{matrix}\boldsymbol{A} & \boldsymbol{B}\end{matrix}\right]\in\mathbb{R}^{N\times2d}$.
Then, the following decomposition of $\boldsymbol{W}$ 
\[
\boldsymbol{W}=\boldsymbol{U}\boldsymbol{\Sigma}\boldsymbol{V}^{T}
\]
is an SVD, such that
\[
\left\Vert \boldsymbol{A}^{T}\boldsymbol{u}_{i}\right\Vert _{2}^{2}=\left\Vert \boldsymbol{B}^{T}\boldsymbol{u}_{i}\right\Vert _{2}^{2}=\frac{1}{2}\sigma_{i}^{2}\qquad\forall i,
\]
where $\boldsymbol{u}_{i}$ is the $i$th column of $\boldsymbol{U}$
and $\sigma_{i}$ equals $\boldsymbol{\Sigma}\left[i,i\right]$ if
$i\leq2d$ and $0$ otherwise.
In addition, $\boldsymbol{V}=\frac{1}{\sqrt{2}}\left[\begin{matrix}\boldsymbol{Q} & \boldsymbol{Q}\\
\boldsymbol{R} & -\boldsymbol{R}
\end{matrix}\right]\in\mathbb{R}^{2d\times2d}$, $\boldsymbol{\Sigma}^{2}=\left[\begin{matrix}\boldsymbol{I}+\boldsymbol{\Gamma} & \boldsymbol{0}\\
\boldsymbol{0} & \boldsymbol{I}-\boldsymbol{\Gamma}
\end{matrix}\right]\in\mathbb{R}^{2d\times2d}$, and $\boldsymbol{U}=\boldsymbol{W}\boldsymbol{V}\boldsymbol{\Sigma}^{\dagger}\in\mathbb{R}^{N\times2d}$,
where $\boldsymbol{\Sigma}^{\dagger}$ is the pseudo-inverse of $\boldsymbol{\Sigma}$ and $\boldsymbol{A}^{T}\boldsymbol{B}=\boldsymbol{Q}\boldsymbol{\Gamma}\boldsymbol{R}^{T}\in \mathbb{R}^{d\times d}$ is an SVD.
\end{lemma}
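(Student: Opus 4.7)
The plan is to reduce the SVD claim to an eigendecomposition of the small Gram matrix $\boldsymbol{W}^T\boldsymbol{W}$, verify the prescribed form of $\boldsymbol{V}$ by block-matrix algebra, and then read off $\boldsymbol{U}$ from $\boldsymbol{W}\boldsymbol{V}\boldsymbol{\Sigma}^{\dagger}$. Because an SVD $\boldsymbol{W} = \boldsymbol{U}\boldsymbol{\Sigma}\boldsymbol{V}^T$ is equivalent to diagonalizing $\boldsymbol{W}^T\boldsymbol{W}$ by an orthogonal $\boldsymbol{V}$ with eigenvalues $\boldsymbol{\Sigma}^2$, the bulk of the argument is a direct computation on $2d \times 2d$ blocks, avoiding the much larger $N \times N$ matrix $\boldsymbol{W}\boldsymbol{W}^T$.

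First, using the orthonormality of the columns of $\boldsymbol{A}$ and $\boldsymbol{B}$ together with the given SVD $\boldsymbol{A}^T\boldsymbol{B} = \boldsymbol{Q}\boldsymbol{\Gamma}\boldsymbol{R}^T$, I would obtain
\begin{equation*}
\boldsymbol{W}^T\boldsymbol{W} = \begin{bmatrix} \boldsymbol{I} & \boldsymbol{Q}\boldsymbol{\Gamma}\boldsymbol{R}^T \\ \boldsymbol{R}\boldsymbol{\Gamma}\boldsymbol{Q}^T & \boldsymbol{I} \end{bmatrix}.
\end{equation*}
Checking that the proposed $\boldsymbol{V}$ is orthogonal is a one-line block computation: the diagonal blocks of $\boldsymbol{V}^T\boldsymbol{V}$ reduce to $\tfrac{1}{2}(\boldsymbol{Q}^T\boldsymbol{Q} + \boldsymbol{R}^T\boldsymbol{R}) = \boldsymbol{I}$, and the off-diagonal blocks to $\tfrac{1}{2}(\boldsymbol{Q}^T\boldsymbol{Q} - \boldsymbol{R}^T\boldsymbol{R}) = \boldsymbol{0}$. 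Next, multiplying $\boldsymbol{W}^T\boldsymbol{W}$ on the right by $\boldsymbol{V}$ and simplifying using $\boldsymbol{R}^T\boldsymbol{R} = \boldsymbol{Q}^T\boldsymbol{Q} = \boldsymbol{I}$ yields
\begin{equation*}
\boldsymbol{W}^T\boldsymbol{W}\boldsymbol{V} = \tfrac{1}{\sqrt{2}}\begin{bmatrix} \boldsymbol{Q}(\boldsymbol{I}+\boldsymbol{\Gamma}) & \boldsymbol{Q}(\boldsymbol{I}-\boldsymbol{\Gamma}) \\ \boldsymbol{R}(\boldsymbol{I}+\boldsymbol{\Gamma}) & -\boldsymbol{R}(\boldsymbol{I}-\boldsymbol{\Gamma}) \end{bmatrix} = \boldsymbol{V}\,\boldsymbol{\Sigma}^2,
\end{equation*}
which simultaneously confirms the eigenvector form of $\boldsymbol{V}$ and the expression for $\boldsymbol{\Sigma}^2$.

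With $\boldsymbol{V}$ and $\boldsymbol{\Sigma}$ fixed, I would set $\boldsymbol{U} := \boldsymbol{W}\boldsymbol{V}\boldsymbol{\Sigma}^{\dagger}$ and verify $\boldsymbol{U}^T\boldsymbol{U} = \boldsymbol{\Sigma}^{\dagger}\boldsymbol{V}^T\boldsymbol{W}^T\boldsymbol{W}\boldsymbol{V}\boldsymbol{\Sigma}^{\dagger} = \boldsymbol{\Sigma}^{\dagger}\boldsymbol{\Sigma}^2\boldsymbol{\Sigma}^{\dagger}$, which confirms that the columns of $\boldsymbol{U}$ corresponding to nonzero $\sigma_i$ are orthonormal and that $\boldsymbol{U}\boldsymbol{\Sigma}\boldsymbol{V}^T = \boldsymbol{W}$. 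The norm identity then follows immediately from $\boldsymbol{W}^T\boldsymbol{u}_i = \sigma_i \boldsymbol{v}_i$: splitting this $2d$-vector into its top and bottom halves gives $\boldsymbol{A}^T\boldsymbol{u}_i = (\sigma_i/\sqrt{2})\boldsymbol{q}_{\pi(i)}$ and $\boldsymbol{B}^T\boldsymbol{u}_i = \pm(\sigma_i/\sqrt{2})\boldsymbol{r}_{\pi(i)}$, where $\pi(i)=i$ for $i\leq d$ and $\pi(i)=i-d$ otherwise. Since $\boldsymbol{q}_{\pi(i)}$ and $\boldsymbol{r}_{\pi(i)}$ are unit vectors, squaring norms yields the claimed $\|\boldsymbol{A}^T\boldsymbol{u}_i\|_2^2 = \|\boldsymbol{B}^T\boldsymbol{u}_i\|_2^2 = \sigma_i^2/2$.

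The main point of care, and essentially the only delicate issue, is the handling of zero singular values. Since the entries of $\boldsymbol{\Gamma}$ lie in $[0,1]$ (as singular values of $\boldsymbol{A}^T\boldsymbol{B}$ with $\|\boldsymbol{A}^T\boldsymbol{B}\|_2 \leq 1$), the block $\boldsymbol{I}-\boldsymbol{\Gamma}$ can be rank-deficient, so that some $\sigma_i = 0$; the pseudo-inverse $\boldsymbol{\Sigma}^{\dagger}$ in the definition of $\boldsymbol{U}$ absorbs this, and the corresponding columns of $\boldsymbol{U}$ may be taken to be zero or, if a full orthogonal $\boldsymbol{U}$ is desired, completed arbitrarily in the null space of $\boldsymbol{W}^T$ without affecting any of the verified identities. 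Repeated $\gamma_i$ likewise cause nonuniqueness of the SVD, but the lemma only asserts the existence of one SVD with the prescribed block structure, which this construction provides.
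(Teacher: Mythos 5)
Your proposal is correct and follows essentially the same route as the paper's proof: reduce the claim to the eigendecomposition of the $2d\times 2d$ Gram matrix $\boldsymbol{W}^{T}\boldsymbol{W}$, verify that the columns of the prescribed $\boldsymbol{V}$ are eigenvectors with eigenvalues $1\pm\gamma_i$, and then obtain the norm identity from $\boldsymbol{W}^{T}\boldsymbol{u}_i=\sigma_i\boldsymbol{v}_i$ by splitting into the $\boldsymbol{q}_i$ and $\boldsymbol{r}_i$ halves. You are in fact somewhat more thorough than the paper, which does not explicitly verify the orthogonality of $\boldsymbol{V}$, the identity $\boldsymbol{U}\boldsymbol{\Sigma}\boldsymbol{V}^{T}=\boldsymbol{W}$, or the degenerate case $\sigma_i=0$.
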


\begin{proof}
We have:
\[
\boldsymbol{W}^{T}\boldsymbol{W}=\left[\begin{matrix}\boldsymbol{A}^{T}\\
\boldsymbol{B}^{T}
\end{matrix}\right]\left[\begin{matrix}\boldsymbol{A} & \boldsymbol{B}\end{matrix}\right]=\left[\begin{matrix}\boldsymbol{I} & \boldsymbol{A}^{T}\boldsymbol{B}\\
\boldsymbol{B}^{T}\boldsymbol{A} & \boldsymbol{I}
\end{matrix}\right].
\]
Let $\boldsymbol{v}_{i}=\left[\begin{matrix}\boldsymbol{q}_{i}\\
\boldsymbol{r}{}_{i}
\end{matrix}\right]$ be the $i$th column of $\boldsymbol{V}$, where $\boldsymbol{q}_{i}$ and $\boldsymbol{r}_{i}$ are the $i$th columns of $\boldsymbol{Q}$ and $\boldsymbol{R}$, respectively.
Compute:
\begin{align*}
\boldsymbol{W}^{T}\boldsymbol{W}\boldsymbol{v}_{i} & =\frac{1}{\sqrt{2}}\left[\begin{matrix}\boldsymbol{I} & \boldsymbol{A}^{T}\boldsymbol{B}\\
\boldsymbol{B}^{T}\boldsymbol{A} & \boldsymbol{I}
\end{matrix}\right]\left[\begin{matrix}\boldsymbol{q}_{i}\\
\pm\boldsymbol{r}_{i}
\end{matrix}\right]\\
 & =\frac{1}{\sqrt{2}}\left[\begin{matrix}\boldsymbol{q}_{i}\pm\boldsymbol{A}^{T}\boldsymbol{B}\boldsymbol{r}_{i}\\
\pm\boldsymbol{B}^{T}\boldsymbol{A}\boldsymbol{q}_{i}+\boldsymbol{r}_{i}
\end{matrix}\right]\\
 & =\frac{1}{\sqrt{2}}\left[\begin{matrix}\boldsymbol{q}_{i}\pm\gamma_{i}\boldsymbol{q}_{i}\\
\pm\gamma_{i}\boldsymbol{r}_{i}+\boldsymbol{r}_{i}
\end{matrix}\right]\\
 & =\left(1\pm\gamma_{i}\right)\boldsymbol{v}_{i}\\
 & =\sigma^2_{i}\boldsymbol{v}_{i}.
\end{align*}
In words, $\left\{ \boldsymbol{v}_{i}\right\} $ are the eigenvectors of $\boldsymbol{W}^{T}\boldsymbol{W}$, and thus, they are also the
right singular vectors of $\boldsymbol{W}$.
Since
\[
\boldsymbol{A}=\boldsymbol{W}\left[\begin{matrix}\boldsymbol{I}_{d}\\
\boldsymbol{0}_{d}
\end{matrix}\right],\qquad\boldsymbol{B}=\boldsymbol{W}\left[\begin{matrix}\boldsymbol{0}_{d}\\
\boldsymbol{I}_{d}
\end{matrix}\right],
\]
we have
\begin{align*}
\left\Vert \boldsymbol{A}^{T}\boldsymbol{u}_{i}\right\Vert _{2}^{2} & =\left\Vert \left[\begin{matrix}\boldsymbol{I}_{d} & \boldsymbol{0}_{d}\end{matrix}\right]\boldsymbol{W}^{T}\boldsymbol{u}_{i}\right\Vert _{2}^{2}\\
 & =\left\Vert \left[\begin{matrix}\boldsymbol{I}_{d} & \boldsymbol{0}_{d}\end{matrix}\right]\sigma_{i}\boldsymbol{v}_{i}\right\Vert _{2}^{2}\\
 & =\sigma_{i}^{2}\left\Vert \left[\begin{matrix}\boldsymbol{I}_{d} & \boldsymbol{0}_{d}\end{matrix}\right]\boldsymbol{v}_{i}\right\Vert _{2}^{2}\\
 & =\sigma_{i}^{2}\left\Vert \frac{1}{\sqrt{2}}\boldsymbol{q}_{i}\right\Vert _{2}^{2}\\
 & =\frac{1}{2}\sigma_{i}^{2}.
\end{align*}
Similarly:
\[
\left\Vert \boldsymbol{B}^{T}\boldsymbol{u}_{i}\right\Vert _{2}^{2}=\sigma_{i}^{2}\left\Vert \frac{1}{\sqrt{2}}\boldsymbol{r}_{i}\right\Vert _{2}^{2}=\frac{1}{2}\sigma_{i}^{2},
\]implying that
\[
\Rightarrow\left\Vert \boldsymbol{A}^{T}\boldsymbol{u}_{i}\right\Vert _{2}^{2}=\left\Vert \boldsymbol{B}^{T}\boldsymbol{u}_{i}\right\Vert _{2}^{2}=\frac{1}{2}\sigma_{i}^{2}\qquad\forall i.
\]
\end{proof}

\section{Choosing the Value of $M$: Additional Details}
    \label{SM:threhold}

Let $\boldsymbol{W}_{x},\boldsymbol{W}_{y}\in\mathbb{R}^{N\times d}$ be two matrices with orthogonal columns representing two $d$-dimensional subspaces distributed uniformly over the Grassmann manifold $ \boldsymbol{Gr}(d,N)$. Denote by $0\leq\theta\leq\frac{\pi}{2}$ the smallest principle angle between these two subspaces. Our goal is to approximate the expected value of the cosine of $ \theta $, \textit{i.e.} $ \mathbb{E}[\cos(\theta)] $. It was shown \cite{johnstone2008multivariate} that $ \cos^{2}(\theta) $ is distributed similarly as the largest eigenvalue $ \rho $ of $ (A+B)^{-1}B $, where $ A $ and $ B $ are distributed as Wishart random matrices, with parameters $W_d(\boldsymbol{I}, N-d)$ and $W_d(\boldsymbol{I}, d)$, respectively, and $\boldsymbol{I}$ is the identity. In \cite[p.~2650-2651]{johnstone2008multivariate}, the author computed the limit distribution of $ \rho $ as the dimensions $ N $ and $ d $ grow to infinity. Informally, he showed that 
\begin{equation*}
	\rho = \mu + \sigma Z_1 + O(d^{-4/3}),
\end{equation*}
where $\mu = \sin^2(\varphi)$ and $\sigma \in O(N^{-2/3})$,
such that $\sin^2(\varphi/2) = \frac{d-1/2}{N-1}$
and $ Z_1 $ is a Tracy-Widom random variable. We observe that when the number of samples $ N $ is large (and hence also $ d $), the distribution of $ \rho $ exhibits a concentration of measure effect, in which only values at the close neighborhood around $ \mu $ are probable. Therefore, $ \mu $ is a good estimation for $ \rho $ in large datasets. Note that 
\begin{align*}
	\mu = \sin^2(\varphi) & = 4\sin^2(\varphi/2)\cos^2(\varphi/2)\\
	& = 4\sin^2(\varphi/2) \left(1-\sin^2(\varphi/2) \right)\\
	& = 4\frac{d-1/2}{N-1} \left(1-\frac{d-1/2}{N-1} \right)\\
	& = 4\frac{d-1/2}{N-1} \frac{N-d-1/2}{N-1}.
\end{align*}
Thus,
\begin{align*}
\mathbb{E}[\rho] \approx 4\frac{(d-1/2)(N-d-1/2)}{(N-1)^2}.
\end{align*}
Now, according to Jensen's inequality
\begin{align*}
\mathbb{E}[ \cos(\theta) ] = \mathbb{E}[\sqrt{\cos^2(\theta)} ] \leq \sqrt{\mathbb{E}[ \cos^2(\theta) ]} = \sqrt{\mathbb{E}[\rho]} \approx 2\frac{ \sqrt{d-1/2} \sqrt{N-d-1/2}}{N-1}.
\end{align*}

\section{Sleep Stage Identification: Additional Details}
Here, we describe the procedure applied to the raw measurements, giving rise to the six kernels used as inputs to Algorithm 2 in the paper. Note that the same procedure is applied to all subjects separately.

Let $\boldsymbol{s}_{i}^{\left(k\right)}$ denote the signal from the $i$th sensor of the $k$th subject recorded in a single night.
First, we remove most of the awake stage at the beginning and the end according to the expert annotations, remaining with approximately $5$ hours of recordings.
In addition, we remove unlabelled segments.
Next, to each signal from each sensor we apply a simple DC notch filter and compute its spectrogram denoted as $\boldsymbol{S}_{i}^{\left(k\right)}\in\mathbb{R}^{1,025\times{T}_{k}}$, where $1025$ is the number of frequency bins and $T_k$ is the number of time frames that depends on the varying lengths of the recordings (and is typically about $3000$). We use Hamming analysis windows of length $5$ seconds with $1$ second overlap.
The columns of the spectrogram (the frequency profile in each time frame) are projected into a $10$ dimensional space using PCA, resulting in the features $\boldsymbol{X}_{i}^{\left(k\right)}\in\mathbb{R}^{10\times{T}_{k}}$.
Finally, we construct the kernel $$\boldsymbol{K}_{i}^{\left(k\right)}\left[a,b\right]=\exp\left(-\frac{d^{2}\left(\boldsymbol{x}_{a},\boldsymbol{x}_{b}\right)}{2\sigma_{i,k}^{2}}\right),\qquad\boldsymbol{K}_{i}^{\left(k\right)}\in\mathbb{R}^{{T}_{k}\times{T}_{k}},$$
where $\boldsymbol{x}_{a}$ is the $a$th column of $\boldsymbol{X}_{i}^{\left(k\right)}$, $d\left(\cdot,\cdot\right)$ is the non-linear Mahalanobis distance proposed in \cite{singer2008non} and we set $\sigma_{i,k}=\frac{1}{2}\text{median}\left\{ d\left(\boldsymbol{x}_{a},\boldsymbol{x}_{b}\right)\right\} _{a,b}$.

The four subjects used in our experiment are n1, n2, n3 and n5. We omit n4 since its recording is short and the duration of the sleep stages is unbalanced.

\end{document}